\documentclass[12pt]{article}
 
\usepackage{myarticle}
\usepackage[colorinlistoftodos,prependcaption,textsize=tiny,backgroundcolor=black!5!white,bordercolor=red]{todonotes}
\usepackage{pgfplots}
\pgfplotsset{
  tick label style={font=\footnotesize},
  label style={font=\footnotesize},
  legend style={font=\footnotesize}
}

\newcommand{\R}{\mathbb R}             
\newcommand{\N}{\mathbb N}             

\newcommand{\st}{\textrm{s.t.} \ }          

\DeclareMathOperator*{\argmin}{argmin}
\DeclareMathOperator*{\argmax}{argmax}

\newcommand{\iter}[1]{^{(#1)}}

\let\phitmp\phi
\let\phi\varphi
\let\varphi\phitmp
\let\epstmp\eps
\let\eps\varepsilon
\let\varepsilon\epstmp

\newcommand{\set}[1]{\lbrace #1 \rbrace}                    

\newcommand{\seq}[2][]{(#2)_{#1}}                            
\newcommand{\norm}[2][]{\Vert #2 \Vert_{#1}}                 
\DeclarePairedDelimiter{\ceil}{\lceil}{\rceil}
\newcommand{\scal}[2]{\left\langle #1,#2 \right\rangle}      
\newcommand{\map}[3]{#1\colon #2 \to #3}                     
\newcommand{\opid}{I}                                        

\newcommand{\grad}[1][]{\nabla_{#1}}                              
\newcommand{\Hess}[1][]{\nabla^2_{#1}}                     

\renewcommand{\O}{\mathcal O}                          

\newcommand{\spLin}{\mathcal L}                       
\newcommand{\spSob}[2][]{\ensuremath{H^{#2\ifthenelse{\equal{#1}{\empty}}{}{,#1}}}}

\newcommand{\spVar}{\mathcal X}                         
\newcommand{\spPrm}{\mathcal U}                         

\newcommand{\x}{\bm x}                                  
\newcommand{\xz}{\x \iter{0}}                             
\newcommand{\xk}{\x \iter{k}}                             
\newcommand{\xkp}{\x \iter{k+1}}                          
\newcommand{\xkm}{\x \iter{k-1}}                          
\newcommand{\xK}{\x \iter{K}}                             
\newcommand{\xTp}{\x \iter{T^{\prime}}}                    

\newcommand{\z}{\bm z}                                  

\renewcommand{\u}{\bm u}                                

\newcommand{\w}{\bm w}                                  

\renewcommand{\b}{\bm b}                                

\renewcommand{\v}{\bm v}                                

\newcommand{\e}{\bm e}                                  

\renewcommand{\d}{\bm d}                                  

\newcommand{\JacVarFixMap}[1]{B_{#1}}
\newcommand{\JacVarFixMapk}{\JacVarFixMap{k}}

\newcommand{\JacVarFixMapkTp}{\JacVarFixMap{k+T^{\prime}}}
\newcommand{\JacVarFixMapmin}{\JacVarFixMap{}}
\newcommand{\JacPrmFixMap}[1]{C_{#1}}
\newcommand{\JacPrmFixMapk}{\JacPrmFixMap{k}}

\newcommand{\JacPrmFixMapkTp}{\JacPrmFixMap{k+T^{\prime}}}
\newcommand{\JacPrmFixMapmin}{\JacPrmFixMap{}}

\newcommand{\itrErr}[2][]{e_{#1}\iter{#2}}
\newcommand{\itrErrz}[1][]{\itrErr[#1]{0}}
\newcommand{\itrErrk}[1][]{\itrErr[#1]{k}}
\newcommand{\itrErrkp}[1][]{\itrErr[#1]{k+1}}

\newcommand{\fwdJac}[2][]{\dot X_{#1}\iter{#2}}
\newcommand{\fwdJacz}[1][]{\fwdJac[#1]{0}}
\newcommand{\fwdJack}[1][]{\fwdJac[#1]{k}}
\newcommand{\fwdJackp}[1][]{\fwdJac[#1]{k+1}}
\newcommand{\fwdJacK}[1][]{\fwdJac[#1]{K}}
\newcommand{\fwdJacErr}[2][]{\dot \e_{#1}\iter{#2}}
\newcommand{\fwdJacErrk}[1][]{\fwdJacErr[#1]{k}}
\newcommand{\fwdErr}[2][]{\dot e_{#1}\iter{#2}}
\newcommand{\fwdErrz}[1][]{\fwdErr[#1]{0}}
\newcommand{\fwdErrk}[1][]{\fwdErr[#1]{k}}
\newcommand{\fwdErrkp}[1][]{\fwdErr[#1]{k+1}}

\newcommand{\revJacVar}[2][K]{\bar X_{#1}\iter{#2}}

\newcommand{\revJacVark}[1][K]{\revJacVar[#1]{k}}
\newcommand{\revJacVarkp}[1][K]{\revJacVar[#1]{k+1}}
\newcommand{\revJacVarK}[1][K]{\revJacVar[#1]{K}}
\newcommand{\revJacPrm}[2][K]{\bar U_{#1}\iter{#2}}
\newcommand{\revJacPrmz}[1][K]{\revJacPrm[#1]{0}}
\newcommand{\revJacPrmk}[1][K]{\revJacPrm[#1]{k}}
\newcommand{\revJacPrmkp}[1][K]{\revJacPrm[#1]{k+1}}
\newcommand{\revJacPrmK}[1][K]{\revJacPrm[#1]{K}}
\newcommand{\revErr}[1]{\bar e\iter{#1}}
\newcommand{\revErrk}{\revErr{k}}

\newcommand{\argmapDom}{U}
\newcommand{\argmap}{\x^{\star}}

\newcommand{\fixMap}[1][]{\mathcal A_{#1}}

\newcommand{\deriv}[1][]{D_{#1}}
\newcommand{\derivx}{\deriv[\x]}
\newcommand{\derivz}{\deriv[\z]}
\newcommand{\derivu}{\deriv[\u]}

\newcommand{\appSecLabel}[1]{ssec:#1:proof}
\newcommand{\findProofIn}[1]{\begin{proof}The proof is in Section~\ref{\appSecLabel{#1}} in the appendix.\end{proof}}
\newcommand{\appSubSect}[2]{\subsection{Proof of {#1}~\ref{#2}} \label{\appSecLabel{#2}}}

\newcommand{\crefItms}[2]{%
  \hyperref[#2]{\namecref{#1}~\labelcref*{#1}\ref*{#2}}%
}

\graphicspath{{./figures/}}

\KEYWORDS{Algorithm Unrolling, Automatic Differentiation, Truncated Backpropagation, Warm-starting, Bilevel Optimization}
\pgfplotsset{compat=1.18}

\begin{document}

\thispagestyle{empty}
\begin{center}
\vspace*{0.03\paperheight}
{\Large\bf Understanding the Curse of Unrolling \par}
\bigskip
\bigskip
{\large
Sheheryar Mehmood$^\dagger$ and Florian Knoll$^\star$ and Peter Ochs$^\dagger$ \\ \medskip
{\small
$^\dagger$~Saarland University, Saarbr\"{u}cken, Germany \\
$^\star$~Friedrich-Alexander University, Erlangen, Germany \\
}
}
\end{center}
\bigskip

\begin{abstract}
    Algorithm unrolling is ubiquitous in machine learning, particularly in hyperparameter optimization and meta-learning, where Jacobians of solution mappings are computed by differentiating through iterative algorithms. Although unrolling is known to yield asymptotically correct Jacobians under suitable conditions, recent work has shown that the derivative iterates may initially diverge from the true Jacobian, a phenomenon known as the curse of unrolling. In this work, we provide a non-asymptotic analysis that explains the origin of this behavior and identifies the algorithmic factors that govern it. We show that truncating early iterations of the derivative computation mitigates the curse while simultaneously reducing memory requirements. Finally, we demonstrate that warm-starting in bilevel optimization naturally induces an implicit form of truncation, providing a practical remedy. Our theoretical findings are supported by numerical experiments on representative examples.
\end{abstract}



\section{Introduction} \label{sec:intro}
Many problems in modern machine learning can be formulated as bilevel optimization tasks \citep{DKP+15, DZ20}, where an outer objective depends on the solution of an inner problem. That is, given a sufficiently smooth mapping $\map{\ell}{\spVar\times\spPrm}{\R}$, we solve
\begin{equation} \label{prob:outer:min}
  \min_{\u\in\spPrm} \ell (\argmap (\u), \u) \,,
\end{equation}
where, for each parameter vector $\u$, the quantity $\argmap (\u)$ is defined implicitly as the solution of an optimization problem or, more generally, as the solution of a nonlinear equation in the variable $\x$. Such formulations arise in a wide range of applications, including hyperparameter optimization \citep{Ben00, MDA15, Ped16}, meta-learning \citep{BHT+19, RFK+19, LMR+19}, implicit deep learning \citep{AK17, AAB+19, BKK19, BKK20}, and neural architecture search \citep{LSY19}.

In this work, we focus on deterministic settings in which $\argmap (\u)$ is characterized as the fixed point of a smooth parameterized mapping $\map{\fixMap}{\spVar\times\spPrm}{\spVar}$, that is, $\argmap (\u)$ solves
\begin{equation} \label{prob:non:lin:eq}
  \x = \fixMap (\x, \u) \,.
\end{equation}
for $\x\in\spVar$. This formulation serves as the fixed-point equation of many iterative algorithms used in practice, such as, gradient descent, to solve inner problems of the form
\begin{equation} \label{prob:inner:min}
  \min_{\x \in \spVar} f(\x, \u) \,.
\end{equation}
Under standard assumptions, the solution $\argmap (\u)$ of \eqref{prob:non:lin:eq} or \eqref{prob:inner:min} exists, is unique, and depends smoothly on $\u$.
\begin{figure}[t!]
  \centering
  \includegraphics[width=0.9\textwidth]{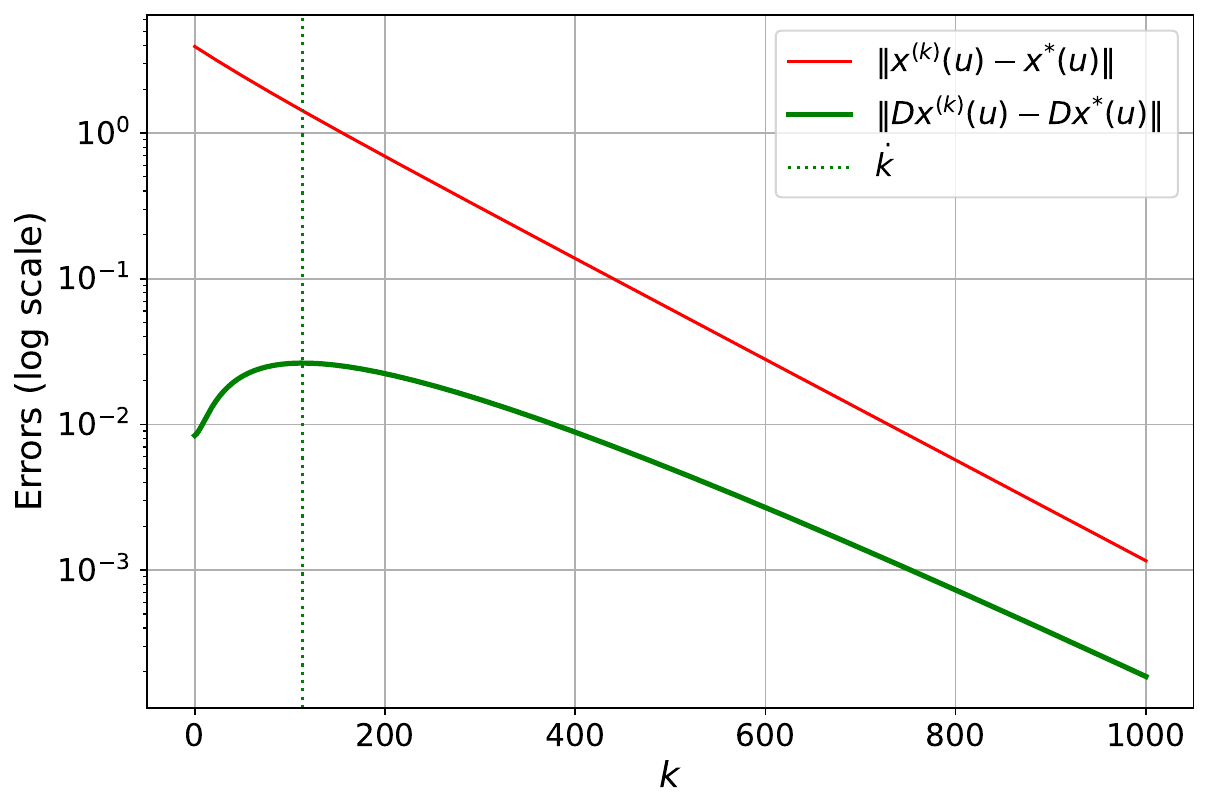}
  \caption{Iterate $\xk (\u)$ vs derivative $\deriv \xk (\u)$ error plot for gradient descent applied to $f (\x, u) \coloneqq \norm{A\x - \b}^2/2 + u\norm{\x}^2/2$. Unlike $\xk (\u)$, $\deriv\xk (\u)$ \emph{initially} drifts away from its limit before \emph{eventually} coming back to it. We provide a non-asymptotic understanding of this transient behavior, called the curse of unrolling, and study simple ways to mitigate it.}
  \label{fig:curse:intro}
\end{figure}

To solve the bilevel problem using gradient-based methods, we need to compute the gradient of the outer objective $\ell (\argmap (\u), \u)$ with respect to the parameter $\u$. By the chain rule, this requires the derivative of the solution map $\argmap(\u)$ with respect to $\u$. When $\argmap(\u)$ is defined implicitly as the fixed point of $\fixMap$, this derivative can be characterized using the implicit function theorem.

Under suitable regularity conditions (see Theorem~\ref{thm:BFPT:IFT}) on $\fixMap$ or the inner objective $f$, the map $\u \mapsto \argmap(\u)$ is differentiable and its Jacobian is given by
\begin{equation*}
  \deriv \argmap (\u) \coloneqq (\opid - \derivx\fixMap(\argmap(\u), \u))^{-1} \derivu\fixMap(\argmap(\u), \u) \,.
\end{equation*}
This approach, commonly referred to as implicit differentiation, has become a standard tool for computing gradients in bilevel optimization and related problems. In practice, since the exact fixed point $\argmap(\u)$ is rarely available, implicit differentiation is typically applied by replacing $\argmap(\u)$ with an approximate solution obtained after a finite number of iterations of an inner algorithm.

In practice, the solution map $\argmap(\u)$ is typically approximated by running an iterative algorithm associated with the mapping $\fixMap$. That is, for any parameter $\u$ and some initial point $\xz (\u)$, the algorithm generates a sequence of iterates $\seq[k\in\N]{\xk (\u)}$ through the updates
\begin{equation} \label{itr:FPI}
  \xkp (\u) \coloneqq \fixMap (\xk (\u), \u) \,,
\end{equation}
which converges to the fixed point $\argmap(\u)$ under suitable assumptions. After a finite number of iterations $K$, the algorithm output $\xK (\u)$ serves as an approximation of $\argmap(\u)$ in the outer objective.

Beyond approximating the solution itself, we may also differentiate the algorithmic map defining the iterates \citep{Dom12, DVFP14, OBP15}. Since $\fixMap$ is smooth, each iterate $\xk (\u)$ is differentiable with respect to $\u$, and its derivative can be computed recursively by differentiating through the update rule, that is,
\begin{equation}
  \deriv \xkp (\u) \coloneqq \JacVarFixMapk (\u) \deriv \xk (\u) + \JacPrmFixMapk (\u)
\end{equation}
where we define $\JacVarFixMapk (\u) \coloneqq \derivx \fixMap (\xk, \u)$ and $\JacPrmFixMapk (\u) \coloneqq \derivu \fixMap (\xk, \u)$. Running this recursion for $K$ iterations yields the Jacobian $\deriv \xK (\u)$ which provides an approximation of the true Jacobian $\deriv \argmap(\u)$. This process amounts to differentiating the finite-depth computational graph of $\u \mapsto \xK (\u)$ formed by the iterations of the algorithm and is, therefore, commonly referred to as algorithm unrolling.

A natural question is whether the favorable convergence properties of the algorithm iterates are reflected at the level of their derivatives. Under suitable assumptions, the sequence $\seq[k\in\N]{\xk(\u)}$ converges linearly to the fixed point $\argmap(\u)$, that is, the error $\norm{\xk(\u)-\argmap(\u)}$ decays at a geometric rate. One might therefore expect the derivative iterates $\deriv \xk(\u)$ to exhibit a similar behavior. Indeed, several works have established that, under standard regularity conditions, the sequence $\seq[k\in\N]{\deriv \xk(\u)}$ converges linearly to $\deriv \argmap(\u)$ with the same asymptotic rate \citep[see, for example,][]{Gil92, GBC+93, BKM+22, MO20, MO25}. However, the derivative error $\norm{\deriv \xk(\u)-\deriv \argmap(\u)}$ may initially increase with $k$ before eventually decreasing \citep{SGB+22}.

This initial increase in the derivative error, as illustrated in Figure~\ref{fig:curse:intro}, is referred to as the \emph{curse of unrolling} and is the subject matter of this paper.

\subsection{Contributions} \label{ssec:intro:contributions}
Our contributions can be summarized as follows:
\begin{enumerate}[label=(\roman*)]
  \item We analyze the non-asymptotic behavior of derivative iterates produced by algorithm unrolling and identify the factors responsible for the curse of unrolling.
  \item We study truncation through the same non-asymptotic lens and show explicitly why excluding early iterations from differentiation alleviates the curse.
  \item We show how warm-starting in bilevel optimization naturally induces truncation of derivative computations which avoids the need for explicit truncation.
  \item We provide empirical results that demonstrate the curse of unrolling and validate the predicted effects of truncation and warm-starting.
\end{enumerate}

\subsection{Related Work}
\textbf{Asymptotic Analysis of Unrolling:}
The convergence of the derivative sequence $\deriv\xk (\u)$ has been provided for general iterative processes \citep{Gil92,Bec94}, first-order methods for smooth \citep{MO20, PV23} and non-smooth optimization \citep{BKM+22, MO24, MO25}, and second-order methods \citep{GBC+93, BPV22}

\textbf{Curse of Unrolling:}
The first study on the curse phenomenon in unrolling was conducted by Scieur et al. \citep{SGB+22}. They provided comprehensive results for quadratic lower-level problems and identified that decreasing the step size reduces the effects of the curse at the cost of a slower convergence. They also designed an algorithm which when unrolled is free from the curse. However, their study was limited to the quadratic problems.

\textbf{Truncation in Unrolling:} Truncation was already hinted at by Gilbert \citep[see Section~4]{Gil92} but a comprehensive study was still missing. It was also studied to reduce the memory and compuational overhead of the backpropagation in \citep{SCHB19}. In \citep{BPV22}, the authors showed that differentiating only the last step of the algorithm is enough for super-linear algorithms and the same effect can be approximated in the linear algorithms by differentiating the last few steps. However, their study did not show how the truncation can help with the curse and in machine learning, the first order methods are predominantly used, for which, the optimal rates are only linear.

\subsection{Notation:}
In this paper, $\spVar$ and $\spPrm$ are Euclidean spaces with their inner products and the induced norms are represented by $\scal{\cdot}{\cdot}$ and $\norm{\cdot}$ respectively. The elements of $\spVar$ and $\spPrm$ are denoted by small bold letters, for instance, $\x\in\spVar$ and $\u\in\spPrm$. $\spLin (\spPrm, \spVar)$ denotes the space of linear maps from $\spPrm$ to $\spVar$ with induced (or spectral) norm topology, unless stated otherwise. A linear operator is denoted by a capital letter, for example, $Q\in\spLin (\spVar, \spVar)$ and $\rho (Q)$ denotes its spectral radius. Throughout this paper, we use standard dotted and barred notation for forward and reverse mode derivatives. Moreover, we frequently make use of \emph{forward pass} and \emph{forward mode} which are standard terminologies in Machine Learning and Automatic Differentiation literature respectively and must not be confused with each other.


\section{Preliminaries} \label{sec:preliminaries}

We recall the fixed-point formulation introduced in Section~\ref{sec:intro} and collect assumptions and results needed for the non-asymptotic analysis. The following global contractivity assumption helps to ensure existence, uniqueness, and smooth dependence of the fixed point on the parameter.
\begin{assumption} \label{ass:FixMap:Contraction}
  $\fixMap$ is $C^1$-smooth and for each $\u \in \spPrm$, $\fixMap(\cdot, \u)$ is a contraction on $\spVar$, that is, there exists $\rho(\u) \in [0, 1)$ such that $\fixMap(\cdot, \u)$ is $\rho(\u)$-Lipschitz continuous.
\end{assumption}
\begin{remark} \label{rem:FixMap:Contraction}
  \begin{enumerate}[label=(\roman*)]
    \item Under Assumption~\ref{ass:FixMap:Contraction}, the contraction modulus $\rho(\u)$ bounds the Jacobian $\derivx\fixMap (\x, \u)$ uniformly, that is, $\norm{\derivx\fixMap (\x, \u)} \leq \rho(\u)$ for all $(\x, \u) \in \spVar \times \spPrm$. This global uniform bound on the spectral norm can be replaced by a local one \citep[Chapter~2, Theorem~1]{Pol87} on the more general, spectral radius. We adopt it since it yields simpler non-asymptotic error bounds. For local results, it is typically sufficient to assume contractivity near a fixed-point.
    \item \label{itm:FixMap:GD:HB} Let $\map{f}{\spVar\times\spPrm}{\R}$ be a $C^2$-smooth function. For some $\alpha > 0$, we define the update map for gradient descent $\map{\fixMap[\alpha]}{\spVar\times\spPrm}{\spVar}$ by
    \begin{equation*}
      \fixMap[\alpha] (\x, \u) \coloneqq \x - \alpha \grad[\x] f (\x, \u) \,,
    \end{equation*}
    and for some $\beta\in[0, 1)$, we define the Heavy-ball update map $\map{\fixMap[\alpha, \beta]}{\spVar\times\spVar\times\spPrm}{\spVar\times\spVar}$ by
    \begin{equation*}
      \fixMap[\alpha, \beta] (\z, \u) \coloneqq ( \fixMap[\alpha] (\x_1, \u) + \beta (\x_1 - \x_2), \x_1 ) \,,
    \end{equation*}
    where $\z \coloneqq (\x_1, \x_2)$. Under global Lipschitz continuity of $\grad[\x] f$ and strong convexity of $f$ with respect to $\x$, $\fixMap[\alpha] (\cdot, \u)$ is a global contraction for suitable $\alpha$. On the other hand, $\fixMap[\alpha, \beta] (\cdot, \u)$
    does not satisfy Assumption~\ref{ass:FixMap:Contraction} in general and we can only have $\rho (\derivz \fixMap[\alpha, \beta] (\z, \u)) < 1$ \citep{Pol64, Pol87}.
  \end{enumerate}
\end{remark}
For a given parameter $\u$ and a smooth initialization map $\map{\xz}{\spPrm}{\spVar}$, we consider the fixed-point iteration \eqref{itr:FPI} for solving \eqref{prob:non:lin:eq}. Under Assumption~\ref{ass:FixMap:Contraction}, Banach fixed-point theorem \citep[Theorem~4.1.3]{AH05} ensures the existence and uniqueness of solution of \eqref{prob:non:lin:eq}, while the implicit function theorem \citep[Theorem~1B.1]{DR09} guarantees its smooth dependence on $\u$.
\begin{theorem} \label{thm:BFPT:IFT}
  Suppose $\fixMap$ satisfies Assumption~\ref{ass:FixMap:Contraction}. Then there exists a $C^1$-smooth map $\map{\argmap}{\spPrm}{\spVar}$ such that for all $\u \in \spPrm$, $\argmap(\u) = \fixMap(\argmap(\u), \u)$ is the unique fixed-point of $\fixMap(\cdot, \u)$ and its Jacobian is given by
  \begin{equation} \label{eq:IFT}
    \deriv \argmap (\u) \coloneqq (\opid - \JacVarFixMapmin(\u))^{-1} \JacPrmFixMapmin(\u) \,,
  \end{equation}
  where the maps $\map{\JacVarFixMapmin}{\argmapDom}{\spLin(\spVar, \spVar)}$ and $\map{\JacPrmFixMapmin}{\argmapDom}{\spLin(\spPrm, \spVar)}$ provide the partial Jacobians of $\fixMap$ evaluated at $(\argmap(\u), \u)$, that is,
  \begin{equation} \label{eq:IFT:FixMap:Derivatives}
    \begin{aligned}
      \JacVarFixMapmin (\u) &:= \derivx\fixMap(\argmap(\u), \u) \\
      \JacPrmFixMapmin (\u) &:= \derivu\fixMap(\argmap(\u), \u) \,.
    \end{aligned}
  \end{equation}
  Moreover, the sequence $\seq[k\in\N]{\xk (\u)}$ generated by \eqref{itr:FPI} converges linearly to $\argmap (\u)$ with rate $\rho (\u)$ from Assumption~\ref{ass:FixMap:Contraction}, that is, for all $k\in\N$,
  \begin{equation} \label{eq:conv:rate}
    \itrErrk (\u) \leq \rho(\u)^{k} \itrErrz (\u) \,,
  \end{equation}
  where we define $\itrErrk (\u) \coloneqq \norm{\xk (\u) - \argmap (\u)}$.
\end{theorem}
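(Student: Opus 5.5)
The plan is to split the statement into three parts: existence and uniqueness of the fixed point, smoothness of $\argmap$ together with formula \eqref{eq:IFT}, and the linear rate \eqref{eq:conv:rate}.

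\emph{Existence, uniqueness and the rate.} Fix $\u\in\spPrm$. By Assumption~\ref{ass:FixMap:Contraction}, $\fixMap(\cdot,\u)$ is a $\rho(\u)$-contraction on the complete space $\spVar$. The Banach fixed-point theorem \citep[Theorem~4.1.3]{AH05} therefore gives a unique fixed point, which we call $\argmap(\u)$. For the rate, we use $\argmap(\u)=\fixMap(\argmap(\u),\u)$ and the update \eqref{itr:FPI} to write
\begin{equation*}
  \itrErrkp(\u) = \norm{\fixMap(\xk(\u),\u)-\fixMap(\argmap(\u),\u)} \leq \rho(\u)\, \itrErrk(\u) \,.
\end{equation*}
Induction on $k$ then yields \eqref{eq:conv:rate}.

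\emph{Smoothness and the Jacobian.} Define $\map{F}{\spVar\times\spPrm}{\spVar}$ by $F(\x,\u)\coloneqq \x-\fixMap(\x,\u)$. It is $C^1$ and satisfies $F(\argmap(\u),\u)=0$. Its partial derivative is $\derivx F = \opid - \derivx\fixMap$. Remark~\ref{rem:FixMap:Contraction} gives $\norm{\derivx\fixMap(\x,\u)}\le\rho(\u)<1$, so a Neumann series shows that $\opid-\JacVarFixMapmin(\u)$ is invertible.

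The implicit function theorem \citep[Theorem~1B.1]{DR09} applied at $(\argmap(\bar\u),\bar\u)$ then gives neighbourhoods $V$ of $\bar\u$ and $W$ of $\argmap(\bar\u)$, and a $C^1$ map $\xi\colon V\to W$, such that $\xi(\u)$ is the unique zero of $F(\cdot,\u)$ in $W$. Global uniqueness from the Banach step forces $\xi=\argmap$ on $V$. Since $\bar\u$ is arbitrary, $\argmap$ is $C^1$ on all of $\spPrm$.

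Finally, differentiate the identity $\argmap(\u)=\fixMap(\argmap(\u),\u)$ with the chain rule:
\begin{equation*}
  \deriv\argmap(\u)=\JacVarFixMapmin(\u)\deriv\argmap(\u)+\JacPrmFixMapmin(\u) \,.
\end{equation*}
Solving for $\deriv\argmap(\u)$ using the invertibility above gives \eqref{eq:IFT}.

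\emph{Main obstacle.} The delicate step is justifying $\norm{\derivx\fixMap}\le\rho(\u)$, which Remark~\ref{rem:FixMap:Contraction} states without proof. It follows from the Lipschitz bound and differentiability: for any direction $\v$,
\begin{equation*}
  \derivx\fixMap(\x,\u)\v=\lim_{t\to0}\frac{\fixMap(\x+t\v,\u)-\fixMap(\x,\u)}{t} \,,
\end{equation*}
and the norm of each difference quotient is at most $\rho(\u)\norm{\v}$. The remaining subtlety is the local-to-global patching of the implicit function, and global uniqueness of the fixed point handles it directly.
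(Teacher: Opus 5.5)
Your proposal is correct and takes the same route as the paper. The paper gives no separate proof: it invokes the Banach fixed-point theorem for existence, uniqueness and the linear rate, and the implicit function theorem for $C^1$-smoothness and the Jacobian formula. You fill in what it leaves implicit: the bound $\norm{\derivx\fixMap(\x,\u)}\le\rho(\u)$, which gives invertibility of $\opid-\JacVarFixMapmin(\u)$, and the local-to-global patching of the implicit function via global uniqueness of the fixed point.
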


\subsection{Automatic Differentiation or Unrolling}
As suggested in Section~\ref{sec:intro}, an alternative to implicit differentiation for evaluating $\deriv\argmap (\u)$ is automatic differentiation which admits two modes: forward and reverse.

\textbf{Forward Mode AD:} The $C^1$-smoothness of $\fixMap$ allows us to write the following recursion.
\begin{equation}\label{itr:FPI:Forward:Mode}
  \fwdJackp (\u) := \JacVarFixMapk(\u) \fwdJack (\u) + \JacPrmFixMapk(\u) \,,
\end{equation}
where we set $\fwdJacz (\u)\coloneqq \deriv\xz(\u)$ and define sequence of maps $\map{\JacVarFixMapk}{\spPrm}{\spLin(\spVar, \spVar)}$ and $\map{\JacPrmFixMapk}{\spPrm}{\spLin(\spPrm, \spVar)}$ by:
\begin{equation} \label{eq:FPI:FixMap:Derivatives}
  \begin{aligned}
    \JacVarFixMapk (\u) &\coloneqq \derivx\fixMap(\xk(\u), \u) \\
    \JacPrmFixMapk (\u) &\coloneqq \derivu\fixMap(\xk (\u), \u) \,.
  \end{aligned}
\end{equation}
This recursion corresponds to forward mode automatic differentiation and is identical to the derivative update introduced in Section~\ref{sec:intro} so that $\deriv \xk (\u) = \fwdJack (\u)$. We define the forward mode errors by:
\begin{equation} \label{eq:error:fwd:AD:scalar}
  \fwdErrk (\u) \coloneqq \norm{\fwdJack (\u) - \deriv\argmap (\u)} \,.
\end{equation}
\textbf{Reverse Mode AD:} Let $\seq[k=0]{\xk (\u)}^{K-1}$ be fixed. We define the reverse-mode quantities $(\revJacVark, \revJacPrmk)$ backward for $k=K-1,\ldots,0$ via
\begin{equation} \label{eq:reverse-recursion}
  \begin{aligned}
    \revJacVark &\coloneqq \revJacVarkp \JacVarFixMapk (\u) \\
    \revJacPrmk &\coloneqq \revJacPrmkp + \revJacVarkp \JacPrmFixMapk (\u) \,,
  \end{aligned}
\end{equation}
where $\JacVarFixMapk$ and $\JacPrmFixMapk$ are the same as in \eqref{eq:FPI:FixMap:Derivatives}. By setting $\revJacPrmK \coloneqq 0$ and $\revJacVarK \coloneqq \opid$, the above recursion outputs the total derivative of the final iterate: $\u\mapsto\xK (\u)$, that is, $\deriv \xK (\u) = \revJacPrmz (\u)$. Following \eqref{eq:error:fwd:AD:scalar}, we define the reverse-mode error by:
\begin{equation} \label{eq:error:rev:AD:scalar}
  \revErrk (\u) \coloneqq \norm{\revJacPrmk (\u) - \deriv\argmap (\u)} \,,
\end{equation}
which should decrease for decreasing $k$.

\begin{table}[t!]
  \caption{Propagation of algorithm and derivative iterates.}
  \label{tab:unrolling}
  \begin{center}
    \renewcommand{\arraystretch}{1.5}
    \begin{small}
      \begin{tabular}{|c|c|c|}
        \hline
        \textbf{Iterations} & \textbf{Iterates} & \textbf{Propagation} \\
        \hline
        \textbf{Unrolled Algorithm} & $\seq[k=0]{\xk}^{K}$ & $0\to K$ \\ 
        \textbf{Forward Mode} & $\seq[k=0]{\fwdJack}^{K}$ & $0\to K$ \\ 
        \textbf{Reverse Mode} & $\seq[k=K]{\revJacPrmk}^{0}$ & $K\to 0$ \\ 
        \hline
      \end{tabular}
    \end{small}
  \end{center}
  \vskip -0.1in
\end{table}
\begin{remark} \label{rem:fwAD:bwAD:equivalence}
  The two modes propagate information in opposite directions, as summarized in Table~\ref{tab:unrolling}. Forward mode AD can be evaluated alongside the algorithm iterations, while reverse mode AD requires storing intermediate iterates during a forward pass before a backward sweep, which can be memory-intensive for large K. Moreover, reverse mode computes vector–Jacobian products, whereas forward mode computes Jacobian–vector products. For a better understanding and ease of comparison, we construct full Jacobians in both modes. Due to the equivalence of forward and reverse mode automatic differentiation \citep{GW08}, this yields $\deriv \xK = \fwdJacK = \revJacPrmz$.
\end{remark}

The following lemma provides a foundation for proving the convergence of the derivative sequence $\deriv \xk (\u)$.
\begin{lemma} \label{lem:AD:conv}
  Let $\seq[k\in\N]{\itrErrk}$ and $\seq[k\in\N]{\fwdErrk}$ be non-negative sequences. Assume that there exist constants $\rho\in[0,1)$ and $\Gamma \geq 0$ such that, for all $k\in\N$,
  \begin{equation} \label{eq:fwd:Jac:bound:k+1:k}
    \fwdErrkp \leq \rho \fwdErrk + \Gamma \itrErrk \,.
  \end{equation}
  Then, for any $k\in\N$, it holds that
  \begin{equation} \label{eq:fwd:Jac:bound:k:0}
    \fwdErrk \leq \rho^k \fwdErrz + \Gamma \sum_{i=0}^{k-1} \rho^{k - i - 1}\itrErr{i} \,.
  \end{equation}
  Moreover, when $\itrErrkp \leq \rho \itrErrk$, \eqref{eq:fwd:Jac:bound:k:0} reduces to
  \begin{equation} \label{eq:fwd:Jac:bound:k:0:final}
    \fwdErrk \leq \rho^k \fwdErrz + k\rho^{k-1} \Gamma \itrErrz \,.
  \end{equation}
\end{lemma}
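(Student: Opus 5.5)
The plan is to prove \eqref{eq:fwd:Jac:bound:k:0} by induction on $k$, and then obtain \eqref{eq:fwd:Jac:bound:k:0:final} by substituting the geometric decay of $\itrErrk$.

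\textbf{Base case.} For $k=0$ the sum in \eqref{eq:fwd:Jac:bound:k:0} is empty, so the claim reads $\fwdErrz \leq \fwdErrz$ and holds trivially.

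\textbf{Inductive step.} Assume \eqref{eq:fwd:Jac:bound:k:0} holds for some $k$. Starting from \eqref{eq:fwd:Jac:bound:k+1:k}, and using $\rho\geq 0$ so that multiplying the inductive hypothesis by $\rho$ preserves the inequality, we get
\begin{equation*}
  \fwdErrkp \leq \rho\Bigl(\rho^k \fwdErrz + \Gamma \sum_{i=0}^{k-1} \rho^{k-i-1}\itrErr{i}\Bigr) + \Gamma \itrErrk
  = \rho^{k+1}\fwdErrz + \Gamma\sum_{i=0}^{k}\rho^{k-i}\itrErr{i}.
\end{equation*}
The last step absorbs $\Gamma\itrErrk$ as the $i=k$ term of the sum, since its weight is $\rho^{0}=1$. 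This is exactly \eqref{eq:fwd:Jac:bound:k:0} with $k$ replaced by $k+1$. Equivalently, one could unroll the recursion directly, as for a discrete variation-of-constants formula.

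\textbf{Final bound.} Now suppose additionally that $\itrErrkp \leq \rho\,\itrErrk$ for all $k$. A trivial induction gives $\itrErr{i}\leq \rho^{i}\itrErrz$. Since $\Gamma\geq 0$ and $\rho\geq0$, every term of the sum is nonnegative, so we may substitute this bound termwise:
\begin{equation*}
  \Gamma\sum_{i=0}^{k-1}\rho^{k-i-1}\itrErr{i} \leq \Gamma\sum_{i=0}^{k-1}\rho^{k-1}\itrErrz = k\rho^{k-1}\Gamma\itrErrz.
\end{equation*}
This yields \eqref{eq:fwd:Jac:bound:k:0:final}. For $k=0$ the term $k\rho^{k-1}$ should be read as $0$; this convention also covers $\rho=0$.

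The only point requiring care is this edge-case convention. The rest is routine bookkeeping of indices and of the signs of $\rho$ and $\Gamma$.
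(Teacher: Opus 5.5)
Your proposal is correct and follows the same route as the paper's proof: induction on the one-step recursion gives the sum bound, and substituting $\itrErr{i}\leq\rho^{i}\itrErrz$ termwise turns every summand into $\rho^{k-1}\itrErrz$. Your treatment is somewhat more explicit than the paper's, since you handle the index bookkeeping, the use of $\rho,\Gamma\geq 0$, and the $k=0$ (and $\rho=0$) convention carefully.
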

\findProofIn{lem:AD:conv}
Below we assume global Lipschitz continuity of $\deriv\fixMap$ as well as a global bound on $\derivu\fixMap$ which along with the global contractivity in Assumption~\ref{ass:FixMap:Contraction} provides non-asymptotic error bound for the derivative sequence.
\begin{assumption} \label{ass:FixMap:Lipschitz:Derivative}
  For all $\u\in\spPrm$, there exist non-negative constants $M_{\x}(\u)$, $M_{\u}(\u)$ and $\kappa (\u)$ such that the mappings $\x \mapsto \derivx\fixMap(\x, \u)$ and $\x \mapsto \derivu\fixMap(\x, \u)$ are Lipschitz continuous with constants $M_{\x}(\u)$ and $M_{\u}(\u)$ respectively and $\norm{\derivu\fixMap(\x, \u)} \leq \kappa (\u)$ for all $\x \in \spVar$.
\end{assumption}
\begin{remark} \label{rem:FixMap:Lipschitz:Derivative}
  \begin{enumerate}[label=(\roman*)]
    \item Similarly, the local Lipschitz continuity and bound conditions are sufficient for asymptotic convergence results of $\deriv\xk (\u)$.
    \item \label{itm:Lipschitz:Derivative:GD:HB} For the setting of Remark~\ref{rem:FixMap:Contraction}\ref{itm:FixMap:GD:HB} and under additional global Lipschitz continuity of $\deriv(\grad[\x] f)$, the mappings $\deriv\fixMap[\alpha]$ and $\deriv\fixMap[\alpha, \beta]$ are Lipschitz continuous with constants proportional to $\alpha$ and the Lipschitz constant of $\deriv(\grad[\x] f)$.
  \end{enumerate}
\end{remark}
Using the above assumption we can provide the following bound which is useful for proving convergence of the derivative sequence $\fwdJack (\u)$.
\begin{lemma} \label{lem:FixMap:Derivative:Bound}
  Suppose $\fixMap$ satisfies Assumption~\ref{ass:FixMap:Lipschitz:Derivative} and the mappings $\JacVarFixMapk$, $\JacPrmFixMapk$, $\JacVarFixMapmin$ and $\JacPrmFixMapmin$ be defined in \eqref{eq:FPI:FixMap:Derivatives} and \eqref{eq:IFT:FixMap:Derivatives}. Then for any $\u \in \spVar$, we have the following bound.
  \begin{equation} \label{eq:FixMap:Derivative:Bound}
    \begin{aligned}
      \norm{\big( \JacVarFixMapk (\u) - \JacVarFixMapmin (\u) \big) &\deriv\argmap (\u) + \JacPrmFixMapk (\u) - \JacPrmFixMapmin (\u)} \\
      &\leq \Gamma (\u) \itrErrk (\u) \,,
    \end{aligned}
  \end{equation}
  where $\itrErrk (\u)$ is defined in Theorem~\ref{thm:BFPT:IFT} and $\Gamma (\u)$ is given by
  \begin{equation}
    \Gamma (\u) \coloneqq M_{\x} (\u) \frac{\kappa(\u)}{1 - \rho (\u)} + M_{\u} (\u) \,.
  \end{equation}
\end{lemma}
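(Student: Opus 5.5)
The plan is to split the left-hand side of \eqref{eq:FixMap:Derivative:Bound} with the triangle inequality into a term involving $\JacVarFixMapk - \JacVarFixMapmin$ and a term involving $\JacPrmFixMapk - \JacPrmFixMapmin$, and to bound each by a Lipschitz estimate from Assumption~\ref{ass:FixMap:Lipschitz:Derivative}. Concretely, by submultiplicativity of the induced norm,
\begin{equation*}
  \norm{\big(\JacVarFixMapk (\u) - \JacVarFixMapmin (\u)\big)\deriv\argmap(\u) + \JacPrmFixMapk (\u) - \JacPrmFixMapmin (\u)} \leq \norm{\JacVarFixMapk (\u) - \JacVarFixMapmin (\u)}\,\norm{\deriv\argmap(\u)} + \norm{\JacPrmFixMapk (\u) - \JacPrmFixMapmin (\u)} \,.
\end{equation*}
Since $\JacVarFixMapk(\u)$ and $\JacVarFixMapmin(\u)$ are $\derivx\fixMap(\cdot,\u)$ evaluated at $\xk(\u)$ and $\argmap(\u)$, Lipschitz continuity gives $\norm{\JacVarFixMapk (\u) - \JacVarFixMapmin (\u)} \leq M_{\x}(\u)\,\itrErrk(\u)$. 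By the same argument, $\norm{\JacPrmFixMapk (\u) - \JacPrmFixMapmin (\u)} \leq M_{\u}(\u)\,\itrErrk(\u)$.

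The remaining ingredient is a uniform bound on $\norm{\deriv\argmap(\u)}$, which I would derive from the formula \eqref{eq:IFT} in Theorem~\ref{thm:BFPT:IFT}. By Remark~\ref{rem:FixMap:Contraction}, $\norm{\JacVarFixMapmin(\u)} \leq \rho(\u) < 1$. A Neumann series argument then gives $\norm{(\opid - \JacVarFixMapmin(\u))^{-1}} \leq \sum_{j\geq 0}\rho(\u)^j = 1/(1-\rho(\u))$. Assumption~\ref{ass:FixMap:Lipschitz:Derivative} gives $\norm{\JacPrmFixMapmin(\u)} \leq \kappa(\u)$. Together these yield $\norm{\deriv\argmap(\u)} \leq \kappa(\u)/(1-\rho(\u))$.

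Substituting this into the first estimate gives exactly $\Gamma(\u)\,\itrErrk(\u)$. The only subtle point is that the bound on $\norm{\deriv\argmap}$ needs the contraction property. Assumption~\ref{ass:FixMap:Contraction} is therefore implicitly required in addition to the stated Assumption~\ref{ass:FixMap:Lipschitz:Derivative}, since both $\argmap$ and $\rho$ refer to it. Beyond that, the argument is routine.
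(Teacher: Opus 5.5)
Your proof is correct and takes the same route as the paper. Both use the triangle inequality with submultiplicativity, the Lipschitz bounds from Assumption~\ref{ass:FixMap:Lipschitz:Derivative}, and $\norm{\deriv\argmap(\u)} \leq \kappa(\u)/(1-\rho(\u))$; the paper cites that last bound from the literature, whereas you derive it yourself via a Neumann series. You are also right that Assumption~\ref{ass:FixMap:Contraction} is implicitly required, since $\argmap$ and $\rho(\u)$ come from it.
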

\findProofIn{lem:FixMap:Derivative:Bound}
The following result follows from Lemmas~\ref{lem:AD:conv} and \ref{lem:FixMap:Derivative:Bound} and provides non-asymptotic error bound for $\fwdJack (\u)$.
\begin{theorem}[Convergence of Derivative Iterates] \label{thm:AD:conv}
  Suppose $\fixMap$ satisfies Assumptions~\ref{ass:FixMap:Contraction} and \ref{ass:FixMap:Lipschitz:Derivative}. Then the sequence $\seq[k\in\N]{\fwdJack (\u)}$ generated by \eqref{itr:FPI:Forward:Mode} converges linearly to $\deriv\argmap (\u)$. In particular, for any $\u\in\spPrm$ and for all $k\in\N$,
  \begin{equation} \label{eq:AD:conv:rate}
    \fwdErrk (\u) \leq \rho(\u)^{k} \fwdErrz (\u) + k \rho(\u)^{k-1} \Gamma (\u) \itrErrz (\u) \,,
  \end{equation}
  where $\rho (\u)$ is the contraction constant from Assumption~\ref{ass:FixMap:Contraction} and $\Gamma (\u)$ is defined in Lemma~\ref{lem:FixMap:Derivative:Bound}.
\end{theorem}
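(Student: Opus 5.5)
The plan is to derive the one-step recursion \eqref{eq:fwd:Jac:bound:k+1:k} for the forward-mode errors $\fwdErrk(\u)$ and then invoke Lemma~\ref{lem:AD:conv}. Fix $\u\in\spPrm$ and suppress it from the notation.

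The first step is to rewrite the forward recursion as an error recursion. By Theorem~\ref{thm:BFPT:IFT}, $\deriv\argmap$ satisfies the fixed-point identity $\deriv\argmap = \JacVarFixMapmin \deriv\argmap + \JacPrmFixMapmin$, which is equivalent to \eqref{eq:IFT}. Subtracting this from \eqref{itr:FPI:Forward:Mode} gives
\begin{equation*}
  \fwdJackp - \deriv\argmap = \JacVarFixMapk\big(\fwdJack - \deriv\argmap\big) + \big(\JacVarFixMapk - \JacVarFixMapmin\big)\deriv\argmap + \JacPrmFixMapk - \JacPrmFixMapmin \,.
\end{equation*}
This is obtained by adding and subtracting $\JacVarFixMapk\deriv\argmap$.

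The second step is to bound the two pieces. By Remark~\ref{rem:FixMap:Contraction}, Assumption~\ref{ass:FixMap:Contraction} gives $\norm{\JacVarFixMapk}\le\rho$. The remaining term is exactly the quantity controlled by Lemma~\ref{lem:FixMap:Derivative:Bound}, which yields the bound $\Gamma\itrErrk$. Using submultiplicativity of the operator norm and the triangle inequality, we get $\fwdErrkp \le \rho\fwdErrk + \Gamma\itrErrk$ for all $k$. This is precisely \eqref{eq:fwd:Jac:bound:k+1:k}.

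The third step is to check the extra hypothesis of Lemma~\ref{lem:AD:conv}. We need $\itrErrkp\le\rho\itrErrk$, which follows from the contraction property and the fixed-point identity: $\norm{\fixMap(\xk,\u)-\fixMap(\argmap,\u)}\le\rho\norm{\xk-\argmap}$. Lemma~\ref{lem:AD:conv} then gives \eqref{eq:AD:conv:rate} directly.

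For linear convergence, note that $k\rho^{k-1}\le C_\eps(\rho+\eps)^k$ for any $\eps>0$. This bound is asymptotically of the same rate as the right-hand side, so it yields the claimed linear convergence.

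I expect no real obstacle. The only care point is ensuring $\deriv\argmap$ satisfies the linearized fixed-point equation and that the norm bound on $\JacVarFixMapk$ is uniform, and both follow from earlier results. One should also interpret "linearly" in the R-linear sense, with any rate in $(\rho,1)$, because of the factor $k$.
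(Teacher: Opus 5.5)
Your proposal is correct and follows the paper's proof essentially step for step. You add and subtract $\JacVarFixMapk\deriv\argmap$, bound the two resulting pieces by $\rho\fwdErrk$ and $\Gamma\itrErrk$ using the contraction bound and Lemma~\ref{lem:FixMap:Derivative:Bound}, and then apply Lemma~\ref{lem:AD:conv}. Your explicit check that $\itrErrkp\le\rho\itrErrk$, and your note that ``linear'' should be read as R-linear with any rate in $(\rho,1)$, are small clarifications that the paper leaves implicit.
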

\findProofIn{thm:AD:conv}


\section{The Curse of Unrolling} \label{sec:curse}

In this section, we study the finite-iteration behavior of the derivative iterates generated by algorithm unrolling. While asymptotic convergence of these iterates is well understood, we focus on their non-asymptotic behavior and show that, at finite depth, the derivative error may increase in the earlier iterations. Scieur et al. \citep{SGB+22} called this phenomenon, the curse of unrolling.

\begin{figure}[t]
  \begin{center}
    \centerline{\includegraphics[width=0.9\columnwidth]{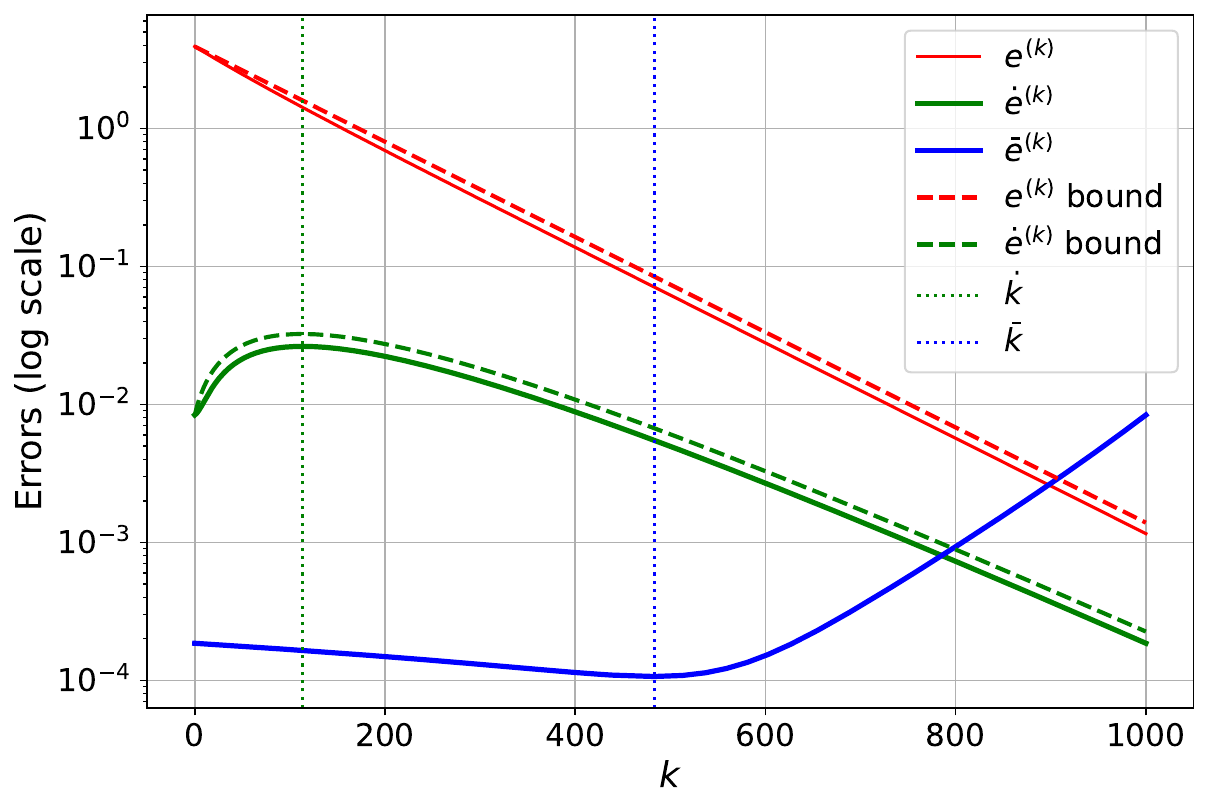}}
    \caption{Error evolution of $\itrErrk (\u)$, $\fwdErrk (\u)$, and $\revErrk (\u)$ generated by gradient descent applied to $f (\x, u) \coloneqq \norm{A\x - \b}^2/2 + u\norm{\x}^2/2$. The dashed lines denote the bounds given in \eqref{eq:conv:rate} and \eqref{eq:AD:conv:rate}. The vertical lines denote $\dot k$ and $\bar k$ defined in Sections~\ref{ssec:curse:forward} and \ref{ssec:curse:reverse} respectively.}
    \label{fig:curse:def}
  \end{center}
  \vskip -0.2in
\end{figure}
\subsection{Forward Mode AD} \label{ssec:curse:forward}
We refer to an initial increase in the forward-mode derivative error sequence $(\fwdJacErrk (\u))_{0\leq k\leq K}$ as the curse of unrolling. Although $\fwdJacErrk (\u)$ converges asymptotically, it may increase for a number of initial iterations before reaching its maximum. This behavior is clearly visible in Figure~\ref{fig:curse:def}, where the forward-mode derivative error initially increases before decaying. We define the index at which the forward-mode derivative error is largest as $\dot k := \argmax_{0 \leq k \leq K} \ \fwdJacErrk (\u)$.
\subsection{Reverse Mode AD} \label{ssec:curse:reverse}
A similar non-asymptotic behavior can be observed for the reverse mode derivative iterates. In contrast to forward mode, the reverse mode derivative error may initially decrease as the backward sweep begins, before increasing as the propagation continues toward earlier iterations as depicted in Figure~\ref{fig:curse:def}. We similarly define an index corresponding to the minimum value of the reverse-mode derivative error, that is, $\bar k := \argmin_{0 \leq k \leq K} \ \revErrk (u)$.
\subsection{Interpretation of the Non-asymptotic Results}
We now interpret the non-asymptotic bounds established in Section~\ref{sec:preliminaries} and explain how they give rise to the curse of unrolling. Lemma~\ref{lem:AD:conv} shows that the derivative error accumulates contributions of $\itrErrk$ from all previous iterations into a growing-then-decaying term $\O(k\rho^k)$. Lemma~\ref{lem:FixMap:Derivative:Bound} provides a bound on the size $\Gamma (\u)$ of these contributions. Finally, Theorem~\ref{thm:AD:conv} puts everything together and gives us the bound which contains a decaying term $\O(\rho (\u)^k)$, a growing-then-decaying term $\O(k \rho (\u)^k)$ which is the source of the curse and the algorithmic constants controlling the curse term. Since we only have an upper bound for the error, there is no guarantee that the curse of unrolling occurs. However, if it does, the upper bound allows us to control it. Therefore, understanding this upper bound is essential.

\subsection{Factors Governing the Curse}
We now briefly discuss the algorithmic constants in \eqref{eq:AD:conv:rate} that govern the curse of unrolling. This will help us build intuition on how to mitigate the curse later. We also demonstrate these effects empirically in Section~\ref{sec:experiments}.
\subsubsection{Rate of Convergence}
For faster algorithms, that is, when $\rho(\u)$ is small, the geometrically decaying term $\rho(\u)^k$ dominates the linear term $k$ more quickly, reducing both the magnitude and duration of the initial increase in the derivative error.
\subsubsection{Smoothness of the Update Map}
The smoothness of the update map, in particular, the Lip\-schitz constant of $\deriv\fixMap$ plays a critical role. A larger Lip\-schitz constant delays the decline of the curse term $k \rho (\u)^k$, leading to a more pronounced curse of unrolling. The Lip\-schitz constant is mainly affected by two factors:

\textbf{Small step size:} For algorithms such as gradient descent or the Heavy-ball method, a smaller step size leads to a small Lipschitz constant (see Remark~\ref{rem:FixMap:Lipschitz:Derivative}\ref{itm:Lipschitz:Derivative:GD:HB}).

\textbf{Almost-Linear Objectives:} When the objective function is nearly linear, for example in the case of Huber loss \citep{Hub92}, the quantities $\Hess[\x] f$ and $\derivu \grad[\x] f$ are small, resulting in a milder curse of
unrolling.

\subsubsection{Quality of Initialization}
Finally, the quality of the initial iterate $\xz(\u)$ also affects the severity of the curse. A larger initial error $\itrErrz(\u)$ allows the curse term $k \rho(\u)^k$ to grow to a larger magnitude before eventually decreasing, resulting in a more significant initial increase in the derivative sequence.


\section{Mitigating the Curse of Unrolling} \label{sec:truncation}

In Section~\ref{sec:curse}, we identified the factors that give rise to the curse of unrolling. Most of these factors are algorithmic constants, such as the convergence rate $\rho$ or the Lipschitz constant of $\deriv\fixMap$, which can only be improved through parameter tuning (e.g., step size) or by changing the algorithm. The only factor that can be influenced without altering the algorithm is the quality of the initial iterate $\xz (\u)$. In this section, we study a simple strategy to mitigate the curse by truncating early iterations from the derivative computation.

\subsection{Truncation or Late-start}
Intuitively, the curse is most pronounced during the early iterations, where the non-asymptot\-ic derivative error may grow. Excluding these iterations from differentiation helps avoid this unfavorable regime. Accordingly, we truncate the derivative computation by excluding early algorithm iterations. In reverse mode, truncation amounts to stopping the backward sweep at an index $T^{\prime}$ instead of propagating derivatives all the way to the initial iterate. Similarly, in forward mode, truncation is implemented by starting the derivative recursion at the same index $T^{\prime}$, rather than at the beginning of the algorithm.

More formally, in forward mode, we allow the first $T^{\prime}\in\N$ algorithm iterations to run idly and start the derivative recursion from $\xTp$. Specifically, we compute
\begin{equation} \label{itr:FPI:Forward:Mode:Late:Start}
  \fwdJackp[T^{\prime}] (\u) := \JacVarFixMapkTp (\u) \fwdJack[T^{\prime}] (\u) + \JacPrmFixMapkTp (\u) \,,
\end{equation}
for $k = 0,1,\dots$, where we use the same initialization $\fwdJacz[T^{\prime}] := \deriv\xz(\u)$ as before for simplicity. Here, the subscript $T^{\prime}$ indicates the late-started derivative sequence, which differs from the vanilla forward-mode sequence $\fwdJack (\u) = \deriv\xk(\u)$. If we run $K^{\prime}$ algorithm iterations in total with the initial $T^{\prime}$ idle iterations, then the output of the late-started forward mode will be $\fwdJac[T^{\prime}]{K^{\prime}-T^{\prime}}$.

For reverse mode, we first run the algorithm for $K^{\prime}$ iterations and then stop the backward sweep at $k=T^{\prime}$ instead of propagating derivatives back to $k=0$. The output of the truncated reverse mode is therefore $\revJacPrm[K^{\prime}]{T^{\prime}}$ rather than $\revJacPrmz[K^{\prime}]$.

Table~\ref{tab:Unrolling:Truncation} summarizes the algorithm and derivative iterations. Since both modes evaluate the same product arising from the chain rule, the late-started forward mode and the truncated reverse mode satisfy $\fwdJac[T^{\prime}]{K^{\prime}} = \revJacPrm[K^{\prime}]{T^{\prime}}$.

\subsection{Convergence with Truncation}
We now theoretically justify why truncation, or late-start, alleviates the curse of unrolling. For clarity, we focus on forward mode. The same conclusions apply to reverse mode by equivalence (see Remark~\ref{rem:fwAD:bwAD:equivalence}).
\begin{theorem}[Convergence with Truncation] \label{thm:AD:conv:Late:Start}
  Suppose $\fixMap$ satisfies Assumptions~\ref{ass:FixMap:Contraction} and \ref{ass:FixMap:Lipschitz:Derivative}. Then, for any late-start index $T^{\prime}\in\N$, the sequence $\seq[k\in\N]{\fwdJack[T^{\prime}] (\u)}$ generated by \eqref{itr:FPI:Forward:Mode:Late:Start} converges linearly to $\deriv\argmap (\u)$. In particular, for any $\u\in\spPrm$ and for all $k\in\N$,
  \begin{equation} \label{eq:AD:conv:rate:Late:Start}
    \fwdErrk[T^{\prime}] (\u) \leq \rho(\u)^{k} \fwdErrz (\u) + k \rho(\u)^{k+T^{\prime}-1} \Gamma (\u) \itrErrz (\u) \,,
  \end{equation}
  where $\rho (\u)$ is the contraction constant from Assumption~\ref{ass:FixMap:Contraction} and $\Gamma (\u)$ is defined in Lemma~\ref{lem:FixMap:Derivative:Bound}.
\end{theorem}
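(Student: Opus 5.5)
Our plan is to reduce the statement to Lemma~\ref{lem:AD:conv} applied to a shifted sequence, exactly as in the proof of Theorem~\ref{thm:AD:conv}. The only change is that the iterate errors entering the recursion are $\itrErr{k+T^{\prime}}$ rather than $\itrErrk$.

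First we derive the one-step recursion for the late-started error. Subtract the fixed-point identity $\deriv\argmap(\u) = \JacVarFixMapmin(\u)\deriv\argmap(\u) + \JacPrmFixMapmin(\u)$, which is a rearrangement of \eqref{eq:IFT}, from \eqref{itr:FPI:Forward:Mode:Late:Start}. This gives
\begin{equation*}
  \fwdJackp[T^{\prime}] - \deriv\argmap = \JacVarFixMapkTp\big(\fwdJack[T^{\prime}] - \deriv\argmap\big) + \big(\JacVarFixMapkTp - \JacVarFixMapmin\big)\deriv\argmap + \JacPrmFixMapkTp - \JacPrmFixMapmin \,.
\end{equation*}
We then take norms. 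Remark~\ref{rem:FixMap:Contraction} gives $\norm{\JacVarFixMapkTp}\le\rho(\u)$. Lemma~\ref{lem:FixMap:Derivative:Bound}, applied at index $k+T^{\prime}$, bounds the remaining terms by $\Gamma(\u)\,\itrErr{k+T^{\prime}}(\u)$; that lemma holds for every index, since its proof only uses Lipschitz continuity at the iterate $\x\iter{k+T^{\prime}}$. Together these yield
\begin{equation*}
  \fwdErrkp[T^{\prime}] \le \rho\, \fwdErrk[T^{\prime}] + \Gamma\, \itrErr{k+T^{\prime}} \,.
\end{equation*}

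Next we apply Lemma~\ref{lem:AD:conv} to the pair of sequences $(\itrErr{k+T^{\prime}})_k$ and $(\fwdErrk[T^{\prime}])_k$. Theorem~\ref{thm:BFPT:IFT} gives $\itrErr{k+T^{\prime}+1}\le\rho\,\itrErr{k+T^{\prime}}$, so the simplified bound \eqref{eq:fwd:Jac:bound:k:0:final} holds with initial iterate error $\itrErr{T^{\prime}}$:
\begin{equation*}
  \fwdErrk[T^{\prime}] \le \rho^k \fwdErrz[T^{\prime}] + k\rho^{k-1}\Gamma\, \itrErr{T^{\prime}} \,.
\end{equation*}
Since $\fwdJacz[T^{\prime}] = \deriv\xz(\u)$, we have $\fwdErrz[T^{\prime}] = \fwdErrz$. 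Inserting $\itrErr{T^{\prime}}\le\rho^{T^{\prime}}\itrErrz$ from \eqref{eq:conv:rate} then gives \eqref{eq:AD:conv:rate:Late:Start}.

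Linear convergence follows because both terms on the right are $\O(k\rho^k)$ and $\rho(\u)<1$. No step is a serious obstacle. The one point to handle carefully is that Lemma~\ref{lem:AD:conv} is stated for arbitrary nonnegative sequences, so the shift by $T^{\prime}$ is legitimate; we simply relabel $\tilde e^{(k)} := \itrErr{k+T^{\prime}}$.
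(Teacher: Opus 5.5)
Your proposal is correct and follows the paper's proof essentially step for step. It uses the same decomposition of $\fwdJackp[T^{\prime}] - \deriv\argmap$ at index $k+T^{\prime}$ and applies Lemma~\ref{lem:AD:conv} to the shifted iterate errors $\itrErr{k+T^{\prime}}$. It then makes the same final substitutions, $\itrErr{T^{\prime}} \leq \rho^{T^{\prime}}\itrErrz$ and $\fwdErrz[T^{\prime}] = \fwdErrz$. One small attribution point: the one-step inequality $\itrErrkp \leq \rho\,\itrErrk$ needed for the ``moreover'' part of Lemma~\ref{lem:AD:conv} follows directly from the contraction in Assumption~\ref{ass:FixMap:Contraction}, not from the displayed bound in Theorem~\ref{thm:BFPT:IFT}, but this is immediate.
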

\findProofIn{thm:AD:conv:Late:Start}
\begin{remark} \label{rem:AD:conv:Late:Start}
  \begin{enumerate}[label=(\roman*)]
    \item When $T^{\prime}$ is sufficiently large, the curse term is multiplied by an additional factor $\rho(\u)^{T^{\prime}}$. This significantly attenuates the contribution of the growing-then-decaying term in the non-asymptotic bound and thereby alleviates the curse.
    \item The effectiveness of truncation is particularly easy to visualize in reverse mode AD. As shown in Figure~\ref{fig:curse:def}, the iterate $\revJacPrmk (\u)$ is closest to $\deriv \argmap (\u)$ at $k = \bar k$. It is therefore more reasonable to use $\revJacPrm{\bar k} (\u)$ as an estimate of $\deriv\argmap (\u)$ rather than $\revJacPrmz (\u)$.
    \item If Assumptions~\ref{ass:FixMap:Contraction} and \ref{ass:FixMap:Lipschitz:Derivative} are replaced by their local counterparts, the derivative iterates $\fwdJack (\u)$ and $\fwdJack[T] (\u)$ still converge \citep[Section~1.4.2]{MO24} but the linear error bounds provided in Theorems~\ref{thm:AD:conv} and \ref{thm:AD:conv:Late:Start} hold only asymptotically. In this regime, truncation becomes even more relevant, as it avoids the unpredictable behavior of the earlier algorithm iterates.
  \end{enumerate}
\end{remark}

\begin{table}[t!]
  \caption{Propagation of the iterates under truncation.}
  \label{tab:Unrolling:Truncation}
  \begin{center}
    \renewcommand{\arraystretch}{1.5}
    \begin{small}
      \begin{tabular}{|c|c|c|}
        \hline
        \textbf{Iterations} & \textbf{Iterates} & \textbf{Propagation} \\
        \hline
        \textbf{Idle Algorithm} & $\seq[k=0]{\xk}^{T^{\prime}}$ & $0 \to T^{\prime}$ \\
        \textbf{Unrolled Algorithm} & $\seq[k=T^{\prime}]{\xk}^{K^{\prime}}$ & $T^{\prime} \to K^{\prime}$ \\
        \textbf{Forward Mode} & $\seq[k=T^{\prime}]{\fwdJack}^{K^{\prime}}$ & $T^{\prime} \to K^{\prime}$ \\ 
        \textbf{Reverse Mode} & $\seq[k=K^{\prime}]{\revJacPrmk}^{T^{\prime}}$ & $K^{\prime} \to T^{\prime}$ \\ 
        \hline
      \end{tabular}
    \end{small}
  \end{center}
\end{table}

\subsection{Reallocation of Computational Resources} \label{ssec:truncation:realloc}

When truncation is employed, we must decide how to allocate computational cost between running the algorithm itself and differentiating through its iterations. We may, for instance, fix the total number of algorithm iterations $K^{\prime}$ and vary the truncation index $T^{\prime}$, or alternatively fix the number of derivative iterations $K^{\prime} - T^{\prime}$. Both viewpoints are reasonable, but neither directly reflects practical constraints.

Instead, we adopt a computational budget perspective and assume that the combined cost of evaluating the algorithm and its derivatives is fixed and must not be exceeded. When there is no truncation, we run both the algorithm and derivative sweeps for some $K\in\N$ iterations. However, the computational cost of evaluating the Jacobian-vector product (resp. vector-Jacobian product) using forward (resp. reverse) mode AD is $\omega$ times that of evaluation of the function itself where $\omega \in [2, 2.5]$ for forward mode and $\omega \in [3, 4]$ for reverse mode \citep[Chapter~3]{GW08}. In total, the computational cost without truncation is proportional to $K + \omega K$ which will be our computational budget. Therefore, if we reduce the number of derivative iterations by some $T\in\N$, we can increase the number of algorithm iterations by $\omega T$ without exceeding our computational budget. Our new cost is still $(K + \omega T) + \omega (K - T) = K + \omega K$. Therefore, we run our algorithm for a total $K^{\prime} \coloneqq K + \omega T$ number of iterations. Out of these, the last $K - T$ are used in the derivative computation while the remaining $T^{\prime} \coloneqq (K + \omega T) - (K - T) = T + \omega T$ algorithm iterations are run idly. That is, the iterates from $\xz$ to $\x\iter{T^{\prime}-1}$ are not added to the computational graph and only those from $\xTp$ to $\x\iter{K + \omega T - 1}$ are used in the derivative computation process. The late-started forward mode AD iterates $\fwdJack[T^{\prime}]$ are generated by using \eqref{itr:FPI:Forward:Mode:Late:Start} for $k = 0,\ldots, K - T$.
\begin{example}
  Suppose that, without truncation, both the algorithm and the derivative recursions are run for $K = 500$ iterations. If we reduce the number of derivative iterations by $20\%$ and assume $\omega = 3$, then the truncation index is $T = 100$, since the number of derivative iterations becomes $K - T = 0.8K = 400$. Under a fixed computational budget, this reduction allows the number of algorithm iterations to be increased to $K^{\prime} = K + \omega T = 800$.
\end{example}

\subsection{Optimizing the Truncation Index}
By using the resource allocation strategy from the previous section, we now try to find a truncation index that not only reduces the effect of the curse but also leaves sufficient iterations while keeping the computational cost fixed. For finding the optimal truncation index $T$ in $\set{0, \ldots, K}$, we minimize the upper bound provided in Theorem~\ref{thm:AD:conv:Late:Start} for $k = K - T$, that is,
\begin{equation} \label{prob:optimal:T}
  \min_{0 \leq T\leq K} \rho^{K - T} \fwdErrz + (K - T) \rho^{K + \omega T - 1} \Gamma \itrErrz \,.
\end{equation}
Here, we drop the dependence on $\u$ for brevity. If we relax the optimization space to $[0, K]$, then the problem is (strictly) convex as established in the following lemma.
\begin{lemma} \label{lem:optimal:T:strictly:convex:problem}
  Suppose $\fixMap$ satisfies Assumptions~\ref{ass:FixMap:Contraction} and \ref{ass:FixMap:Lipschitz:Derivative} where $\rho \in (0, 1)$. Then the optimization problem given in \eqref{prob:optimal:T}, relaxed to $T\in[0, K]$, is convex. Moreover, the problem is strictly convex provided that either $\fwdErrz > 0$, or $\itrErrz > 0$ and $\Gamma > 0$.
\end{lemma}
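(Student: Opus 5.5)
The plan is to compute the second derivative of the relaxed objective in \eqref{prob:optimal:T} with respect to $T$ and show that it is non-negative on $[0,K]$, and strictly positive under the stated conditions. I write the objective as
\begin{equation*}
  g(T) \coloneqq a\,\rho^{K-T} + b\,\rho^{K-1}\,(K-T)\,\rho^{\omega T}, \qquad a \coloneqq \fwdErrz \ge 0, \quad b \coloneqq \Gamma \itrErrz \ge 0,
\end{equation*}
using $\rho^{K+\omega T-1} = \rho^{K-1}\rho^{\omega T}$. Since $\rho\in(0,1)$, the logarithm $\lambda \coloneqq \ln\rho$ is finite and negative. So both terms are smooth functions of $T \in [0,K]$, and I can treat them separately. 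A sum of convex functions with non-negative weights is convex, and it is strictly convex as soon as one term with a positive weight is strictly convex.

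\textbf{First term.} Here $\rho^{K-T} = \rho^{K} e^{-\lambda T}$ is the exponential of an affine function of $T$. Its second derivative is $\lambda^2\rho^{K-T}$, which is strictly positive because $\lambda\neq 0$ and $\rho>0$. Hence $a\,\rho^{K-T}$ is convex, and strictly convex whenever $a = \fwdErrz > 0$.

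\textbf{Second term.} Set $c \coloneqq \omega\lambda$, which is negative since $\omega>0$ (indeed $\omega\ge 2$). Define $h(T) \coloneqq (K-T)e^{cT}$. Differentiating twice gives
\begin{equation*}
  h'(T) = e^{cT}\bigl(c(K-T) - 1\bigr), \qquad h''(T) = e^{cT}\bigl(c^2(K-T) - 2c\bigr).
\end{equation*}
On $[0,K]$ we have $K-T\ge 0$, so $c^2(K-T)\ge 0$; moreover $-2c>0$. Therefore $h''(T) \ge -2c\,e^{cT} > 0$. Multiplying by $b\rho^{K-1}\ge 0$ shows this term is convex, and strictly convex when $b = \Gamma\itrErrz>0$, that is, when both $\itrErrz>0$ and $\Gamma>0$.

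Combining the two terms yields convexity of \eqref{prob:optimal:T} on $[0,K]$ in general, and strict convexity under either condition in the statement. The only step needing care is the sign of $h''$. The product $(K-T)e^{cT}$ is not convex on all of $\R$, so the argument relies on restricting to $T\le K$, where $K-T\ge 0$. It also relies on $c<0$, which uses both $\rho<1$ and $\omega>0$, and on $\rho>0$, which keeps $\ln\rho$ finite. This is exactly why the lemma assumes $\rho\in(0,1)$ rather than $\rho\in[0,1)$.
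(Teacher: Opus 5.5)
Your proof is correct and follows essentially the same route as the paper. The paper writes the objective as $A e^{\delta T} + B(K-T)e^{-\omega\delta T}$ with $\delta = -\log\rho$ and checks that $h''(T) = A\delta^2 e^{\delta T} + B\omega\delta e^{-\omega\delta T}\bigl(2+\omega\delta(K-T)\bigr)$ is non-negative on $[0,K]$; this is your pair of second derivatives with $c = -\omega\delta$, except that you treat the two terms separately and then invoke the sum rule for (strict) convexity.
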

\findProofIn{lem:optimal:T:strictly:convex:problem}
\begin{remark}
  When $K$ is not large and the constants are known, the discrete problem in \eqref{prob:optimal:T} can be solved efficiently. For large $K$, we may instead solve the relaxed problem in Lemma~\ref{lem:optimal:T:strictly:convex:problem} to obtain an approximation of optimal $T$.
\end{remark}

\subsection{Limitations of Optimizing the Truncation Index}
Although, the procedure for choosing an optimal truncation index by solving \eqref{prob:optimal:T} provides clear theoretical insights on the choice from a practical perspective, it has several limitations. First, it optimizes an upper bound on the derivative error rather than the error itself. Moreover, key quantities such as the convergence rate and Lipschitz constants are typically unknown. Furthermore, the non-asymptotic bounds rely on global contraction and smoothness assumptions that may not hold in practice. Finally, modern automatic differentiation frameworks such as PyTorch \citep{PGM+19}, TensorFlow \citep{ABC+16}, and JAX \citep{BFH+18} require static computational graphs, which prevents adaptive selection of the truncation index when the termination of the inner solver is influenced by a stopping criterion.

Nonetheless, choosing a truncation index using heuristics \citep{SCHB19} and truncating the backward pass is still a practical and effective strategy. In addition to alleviating the curse of unrolling, truncation reduces memory overhead and allows computational resources to be reallocated to the forward pass.


\section{Warm-Starting \& Implicit Truncation} \label{sec:warm:start}
In the previous section, we studied explicit truncation of the derivative computation as a principled way to mitigate the curse of unrolling. While this analysis provides valuable insight, explicit truncation requires selecting a truncation index and relies on quantities that are often unknown in practice. In this section, we discuss warm-starting, a widely used strategy that naturally induces an implicit form of truncation without requiring such choices.

\subsection{Warm-starting in Sequence of Related Problems}
Warm-starting is a standard technique for solving sequences of closely related optimization problems, where the solution of a previous problem is reused as the initial point for the next. Suppose we aim to solve \eqref{prob:non:lin:eq} or \eqref{prob:inner:min} for a sequence of slowly changing parameters $\seq[r=0]{\u\iter{r}}^{R}$. Then after solving the problem associated with $\u\iter{r}$ through \eqref{itr:FPI}, for $r = 0, \ldots, R-1$, the resulting approximation for $\argmap (\u\iter{r})$ can be used to initialize $\xz(\u\iter{r+1})$ for solving the next problem. This strategy has been shown to reduce the number of iterations required to reach a given accuracy and is widely used in applications such as video and image processing, time-varying inverse problems \citep{HR22}, and continuation methods \citep{AG03}, where consecutive problems differ only slightly.

From this perspective, bilevel optimization naturally gives rise to a sequence of related inner problems indexed by the outer variable. As the outer iterate evolves, the corresponding inner problems typically change smoothly, making warm-starting a natural and effective choice. More details can be found in Section~\ref{sec:bilevel:app} in the appendix.

\subsection{Warm-starting as Implicit Truncation}
From the viewpoint of algorithm unrolling, warm-starting has an important and previously underappreciated effect. By initializing the inner algorithm close to its fixed point, warm-starting effectively bypasses the early iterations where the curse of unrolling is most pronounced. As a result, the derivative computation avoids the unfavorable transient regime identified in Section~\ref{sec:curse}.

Warm-starting, therefore, induces an implicit form of truncation: early algorithm iterations are either skipped or substantially shortened, without explicitly modifying the computational graph or choosing a truncation index. Unlike explicit truncation, this mechanism is adaptive, compatible with automatic differentiation frameworks, and already present in many practical implementations of bilevel optimization.

\section{Experiments} \label{sec:experiments}

\section{Conclusion}
\label{sec:Conc}
We studied the non-asymptotic behavior of algorithm unrolling for differentiating solution maps of parametric optimization problems. Despite asymptotic convergence guarantees, derivative iterates may exhibit an initial transient growth or the curse of unrolling, even when the underlying algorithm converges monotonically. Our analysis identifies the factors governing this behavior and shows how truncating early iterations effectively mitigates the curse. We further demonstrate that warm-starting, a standard practice in bilevel optimization, naturally induces an implicit form of truncation. Numerical experiments support our theoretical findings and highlight the practical relevance of truncation-based strategies for unrolling-based differentiation.


\appendix
\section{Proofs}
\appSubSect{Lemma}{lem:AD:conv}
\begin{proof}
  The bound \eqref{eq:fwd:Jac:bound:k:0} follows by iterating \eqref{eq:fwd:Jac:bound:k+1:k} and a straightforward induction. The bound \eqref{eq:fwd:Jac:bound:k:0:final} then follows by substituting $\itrErr{i} \leq \rho^i \itrErrz$ --- which is obtained by recursively expanding the inequality $\itrErrkp \leq \rho \itrErrk$ --- into \eqref{eq:fwd:Jac:bound:k:0} which makes the summand $\rho^{k-1}\itrErrz$.
\end{proof}

\appSubSect{Lemma}{lem:FixMap:Derivative:Bound}
\begin{proof}
  By expanding $\norm{ ( \JacVarFixMapk (\u) - \JacVarFixMapmin (\u) ) \deriv\argmap (\u) + \JacPrmFixMapk (\u) - \JacPrmFixMapmin (\u) }$ and subsituting $\norm{\deriv\argmap} \leq \kappa(\u) / (1 - \rho (\u))$ \citep[Theorem~2.2]{Chr94}, we arrive at our desired result.
\end{proof}

\appSubSect{Theorem}{thm:AD:conv}
\begin{proof}
  First we rewrite $\fwdJackp (\u) - \deriv\argmap (\u)$ as:
  \begin{equation*}
    \begin{aligned}
      \fwdJackp (\u) &- \deriv\argmap (\u) = \JacVarFixMapk (\u) \fwdJack (\u) + \JacPrmFixMapk - \JacVarFixMapmin (\u) \deriv\argmap (\u) - \JacPrmFixMapmin (\u) \\
      &= \JacVarFixMapk (\u) \Big(\fwdJack (\u) - \deriv\argmap (\u)\Big) + \Big( \big( \JacVarFixMapk (\u) - \JacVarFixMapmin (\u) \big) \deriv\argmap (\u) + \JacPrmFixMapk (\u) - \JacPrmFixMapmin (\u) \Big) \,.
    \end{aligned}
  \end{equation*}
  Since the two terms in the last expression are bounded by $\rho (\u) \fwdErrk (\u)$ and $\Gamma (\u) \itrErrk (\u)$ (from Lemma~\ref{lem:FixMap:Derivative:Bound}) respectively, we obtain
  \begin{equation*}
    \fwdErrkp (\u) \leq \rho (\u) \fwdErrk (\u) + \Gamma (\u) \itrErrk (\u) \,.
  \end{equation*}
  The error bound in \eqref{eq:AD:conv:rate} is then obtained by applying Lemma~\ref{lem:AD:conv}.
\end{proof}

\appSubSect{Theorem}{thm:AD:conv:Late:Start}
\begin{proof}
  Applying the same strategy as in the proof of Theorem~\ref{thm:AD:conv}, we write
  \begin{equation*}
    \begin{aligned}
        \fwdJackp[T] (\u) - \deriv\argmap (\u) &= \JacVarFixMap{k+T} (\u) \Big(\fwdJack[T] (\u) - \deriv\argmap (\u)\Big) \\
        &+ \Big( \big( \JacVarFixMap{k+T} (\u) - \JacVarFixMapmin (\u) \big) \deriv\argmap (\u) + \JacPrmFixMap{k+T} (\u) - \JacPrmFixMapmin (\u) \Big) \,.
    \end{aligned}
  \end{equation*}
  Because, the two terms in the expression on the right side are bounded by $\rho (\u) \fwdErrk[T] (\u)$ and $\Gamma (\u) \itrErr{k+T} (\u)$ (from Lemma~\ref{lem:FixMap:Derivative:Bound}) respectively, and by defining the iterate error at $k+T$ by $\itrErr[T]{k} \coloneqq \itrErr{k+T}$, we obtain the following inequality
  \begin{equation*}
    \fwdErrkp[T] (\u) \leq \rho (\u) \fwdErrk[T] (\u) + \Gamma (\u) \itrErrk[T] (\u) \,.
  \end{equation*}
  Applying Lemma~\ref{lem:AD:conv}, we obtain
  \begin{equation*}
    \fwdErrk[T] (\u) \leq \rho(\u)^{k} \fwdErrz[T] (\u) + k \rho(\u)^{k-1} \Gamma (\u) \itrErrz[T] (\u) \,,
  \end{equation*}
  Since $\itrErrz[T] = \itrErr{T} \leq \rho(\u)^T \itrErrz (\u)$ and $\fwdErrz[T^{\prime}] = \fwdErrz$, we arrive at our desired result.
\end{proof}

\appSubSect{Lemma}{lem:optimal:T:strictly:convex:problem}
\begin{proof}
  We first rewrite $h$ by defining $A\coloneqq \rho^{K} \fwdErrz \geq 0$, $B\coloneqq \rho^{K-1}\Gamma\itrErrz \geq 0$ and $\delta \coloneqq -\log (\rho) > 0$ since $\rho \in (0, 1)$. Here $\log$ denotes the natural logarithm. Substituting $A$, $B$, and $\delta$, we get
  \begin{equation*}
    h (T) = A e^{\delta T} + B (K - T) e^{-\omega\delta T} \,.
  \end{equation*}
  We now compute the second derivatives of $h$ and check for its sign. For $h$ to be convex $h^{\prime\prime}$ must not be negative. Moreover, $h^{\prime\prime} > 0$ is a sufficient condition for the strict convexity of $h$. Now the first derivative is given by:
  \begin{equation*}
    h^{\prime} (T) = A\delta e^{\delta T} + B e^{-\omega\delta T} (-1 -\omega\delta(K - T)) \,.
  \end{equation*} 
  Similarly, the second derivative reads:
  \begin{equation*}
    h^{\prime\prime} (T) = A\delta^2 e^{\delta T} + B \omega\delta e^{-\omega\delta T} (2 + \omega \delta(K - T)) \,.
  \end{equation*}
  Since all constants are non-negative and $T \in [0, K]$, therefore $h^{\prime\prime} \geq 0$. Moreover, $h^{\prime\prime}$ is positive when either $\fwdErrz$, or both $\itrErrz$ and $\Gamma$ are positive. This concludes our proof.
\end{proof}

\section{Warm-Starting in Bilevel Optimization} \label{sec:bilevel:app}
Warm-starting is a widely used strategy in bilevel optimization, where the solution of the inner problem from the previous outer iteration is used as the initial point for the estimation of the next inner solution. Both theoretical and empirical studies have shown that warm-starting significantly reduces the number of inner iterations required to achieve a given accuracy \citep{JYL21}.

Let $\map{\ell}{\spVar}{\R}$ be $C^1$-smooth and $\map{\fixMap}{\spVar\times\spPrm}{\spVar}$ satisfies Assumptions~\ref{ass:FixMap:Contraction} and \ref{ass:FixMap:Lipschitz:Derivative}. We consider the bilevel optimization problem
\begin{equation} \label{prob:bilevel}
  \begin{aligned}
    \min_{\u \in \spPrm} \quad & \ell (\argmap (\u)) \\
    \st \quad & \argmap (\u) = \fixMap (\argmap (\u), \u) \,.
  \end{aligned}
\end{equation}
For simplicity we assume that both problems are deterministic. Algorithm~\ref{alg:bilevel} summarizes a standard bilevel optimization procedure that uses reverse-mode automatic differentiation to compute the hypergradients and utilizes warm-starting. In particular, at each outer iteration $r$, the lower-level problem is approximately solved using an iterative algorithm. Once the algorithm terminates, we use the reverse mode AD to compute the gradient of the outer objective $\grad (\ell \circ \x\iter{K_r}) (\u\iter{r})$, which is referred to as hypergradient or the metagradient in Machine Learning literature. After updating the outer variable using a gradient descent step, we warm-start the inner algorithm using the solution from the previous outer iteration.

\begin{Algorithm}[Bilevel Optimization with Warm-Starting]
  \label{alg:bilevel}
  \begin{algorithmic}
    \STATE {\bfseries Input:} \\ \qquad $R\in\N$, total outer iterations \\ \qquad $\eps>0$, inner solver tolerance \\ \qquad $\seq[r\in\N]{\tau_r}$, step size sequence for outer optimization \\ \qquad $\xz$, initialization for inner algorithm
    \\ \qquad $\u\iter{0}$, initialization for outer optimization
    \STATE {\bfseries Output:} \\ \qquad $\u\iter{R}$, estimated outer solution
    \STATE Initialize $\xz (\u\iter{0}) \leftarrow \xz$
    \FOR {$r=0$ {\bfseries to} $R-1$}
    \STATE Initialize $k \leftarrow 0$ \hspace{13em} \textbf{$\triangleright$ Solving the inner problem}
    \REPEAT
    \STATE $\xkp (\u\iter{r}) \leftarrow \fixMap (\xk (\u\iter{r}), \u\iter{r})$
    \STATE $k \leftarrow k + 1$
    \UNTIL $\norm{\xk (\u\iter{r}) - \xkm (\u\iter{r})} \leq \eps$
    \STATE $K_r \leftarrow k$
    \STATE Initialize $\bar\x\iter{K_r} \leftarrow \grad \ell (\x\iter{K_r} (\u\iter{r}))^T$ \hspace{4em} \textbf{$\triangleright$ Computing the hypergradient}
    \STATE Initialize $\bar\u\iter{K_r} \leftarrow 0$
    \FOR {$k=K_r-1$ {\bfseries downto} $0$}
    \STATE $\bar\x\iter{k}_K \leftarrow \bar\x\iter{k}_K \JacVarFixMapk (\u\iter{r})$
    \STATE $\bar\u\iter{k}_K \leftarrow \bar\u\iter{k+1}_K + \bar\x\iter{k}_K \JacPrmFixMapk (\u\iter{r})$
    \ENDFOR
    \STATE Compute $\d\iter{r} \leftarrow (\bar\u\iter{0})^T$

    \STATE $\u\iter{r+1} \leftarrow \u\iter{r} - \tau_r \d\iter{r}$

    \STATE Initialize $\xz (\u\iter{r+1}) \leftarrow \x\iter{K_{r}} (\u\iter{r})$ \hspace{4em} \textbf{$\triangleright$ Warm-starting}
    \ENDFOR
  \end{algorithmic}
\end{Algorithm}

\section{Additional Experiments} \label{sec:experiments:app}
In this section, we provide additional details on the experimental setup and
present supplementary results for varying problem dimensions and truncation
indices.

For any $A \in \R^{M\times N}$ and $\b\in\R^{M}$, we consider the quadratic least-squares problem
\begin{equation}
  f (\x, \u) \coloneqq \frac{1}{2} \norm{A \x - \b}^2 \,,
\end{equation}
where $u=(A,b)$, solved using gradient descent. Let $L$ and $m$ respectively denote the largest and the smallest eigenvalues of $A^T A$. When $m > 0$, the problem is $m$-strongly convex and admits a unique solution expressed analytically by
\begin{equation*}
  \argmap (\u) = (A^T A)^{-1} A^T \b \,,
\end{equation*}
allowing the exact Jacobian of the solution map to be computed for evaluation purposes. For computational efficiency, we generate the Jacobian-vector products $\fwdJack[T] \v$ and vector-Jacobian products $\w^T\revJacPrmk[K+\omega T]$ for $\v$ and $\w$ in $\R^{N}$.

Each element of $A$ is drawn from $U(0, 1)$ while that of $\b$ is sampled from $\mathcal N (0, 1)$. We conduct our experiments for a fix $M = 50$ and multiple problem dimensions $N \in \set{2, 5, 10, 20, 30, 40}$. This affects the conditioning of the problem and hence the convergence rate of gradient descent, allowing us to examine the effect of the convergence rate on the curse of unrolling. For each dimension, we also study the influence of the step size on unrolling by considering two step sizes: the optimal step size $\alpha = 2 / (L + m)$ (Figures~\ref{fig:latestart-dim2-opt}--\ref{fig:latestart-dim40-opt}) and a suboptimal one $\alpha = 1 / (3L)$ (Figures~\ref{fig:latestart-dim2-subopt}--\ref{fig:latestart-dim40-subopt}).

To investigate truncation, we vary the number of initial algorithm iterations excluded from differentiation. We use the budget reallocation strategy described in Section~\ref{ssec:truncation:realloc} and choose $\omega = 3.0$. In particular, for a fixed dimension, all methods are compared under an equal computational budget, with reductions in derivative computations reallocated to additional forward iterations according to $\omega$. This ensures that observed improvements are attributable to truncation rather than increased computation.

For each problem dimension $N$, we use $9$ different values of $T$ from $\set{0, 0.1, 0.2, \ldots, 0.8}K$. For $T = 0$, we run our algorithm for $K \coloneqq \min(\ceil{\log(10^{-3}) / \log(\rho^*)}, 1000)$ iterations where $\rho^* \coloneqq (L - m) / (L + m)$. To keep the computational budget fixed, the algorithm is run for $K + \omega T$ for a given $T$. We start with the same initialization for $\fwdJacz[T]\v = 0$ for all $T$.

We use the linear solver of PyTorch \citep{PGM+19} for computing $\argmap (\u)$ and the corresponding Jacobians. We perform each experiment $100$ times to generate $100$ sequences $\itrErrk$, $\fwdErrk[T]$ and $\revErrk[T]$ and plot the median accros the $100$ experiments.

Each figure corresponds to a single step size $\alpha$ and problem dimension $N$. The errors for forward mode are shown in the left subfigure while those for reverse mode error are depicted in the right subfigure. For a better visualization of the effect of truncation, we plot all the late-started or truncated derivative error plots in the same figure for a given $N$ and $\alpha$.

In each subfigure, the dashed blue curves show the error of the algorithmic iterates for $k = 0, \ldots, K + \omega T_{\max}$. For each truncation index $T$, the solid curves represent the error of the corresponding derivative sequences, plotted over the range $k = T + \omega T, \ldots, K + \omega T$. The dashed black curves with circular markers indicate the final error attained by each derivative algorithm. Specifically, final errors are reported at $k = K + \omega T$ for
forward mode and at $k = T + \omega T$ for reverse mode. The circular marker on the algorithm error curve denotes the final algorithmic error corresponding to the chosen truncation index $T$. In addition, the dashed curves with cross-shaped markers in the left-hand figures depict the upper bound given in \eqref{prob:optimal:T}.

For the optimal step size and large convergence rates $\rho$ (corresponding to $N = 30$ and $N = 40$), the curse of unrolling is more pronounced, as the algorithm requires substantially more iterations to converge. In this regime, the derivative error in both forward and reverse modes increases as additional derivative recursions are performed, making $T_{\max}$ the preferred choice. As the convergence rate improves, that is, as $\rho$ decreases, the severity of the curse diminishes and the optimal truncation index shifts away from $T_{\max}$. Moreover, increasing the truncation index progressively suppresses the curse, as illustrated in Figures~\ref{fig:latestart-dim2-opt}--\ref{fig:latestart-dim20-opt}.

Finally, across all dimensions, using a suboptimal step size slows the convergence but also reduces the severity of the curse of unrolling, which confirms the findings of Scieur et al. \citep{SGB+22}. Moreover, truncation is ineffective in this regime, since even without truncation the derivative error does not increase substantially during the early iterations.

These additional experiments reinforce the conclusions of the main text by
demonstrating that the curse of unrolling is a non-asymptotic phenomenon governed by convergence rate, smoothness, and initialization, and that truncation provides an effective and robust mechanism for mitigating it across a range of problem scales.

\clearpage
\begin{figure}[t]
  \readdef{figures/dim=2/rates.txt}{\mydata}
  \readarray\mydata\ratearray[-,\ncols]
  \centering
  \begin{subfigure}[t]{0.49\textwidth}
    \centering
    \includegraphics[width=\linewidth]{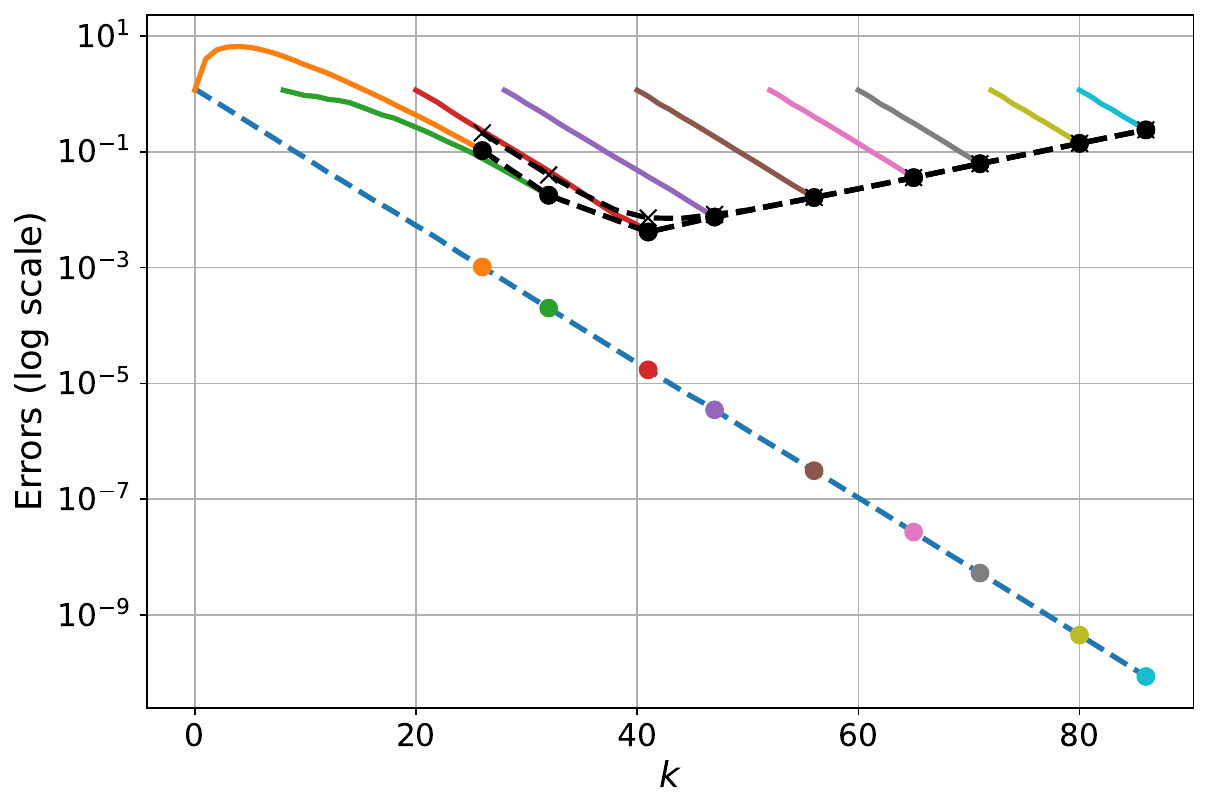}
    \caption{Forward mode.}
    \label{subfig:latestart-dim2-opt-fw}
  \end{subfigure}\hfill
  \begin{subfigure}[t]{0.49\textwidth}
    \centering
    \includegraphics[width=\linewidth]{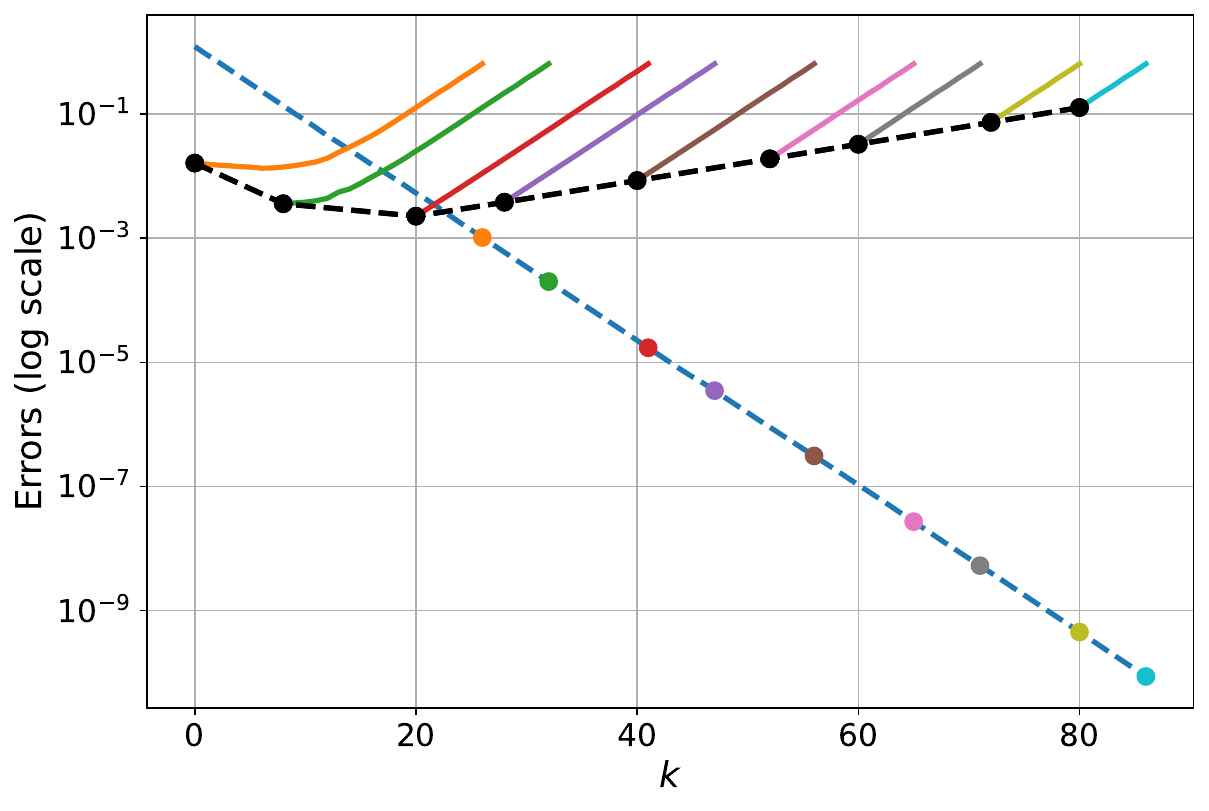}
    \caption{Reverse mode.}
    \label{subfig:latestart-dim2-opt-bw}
  \end{subfigure}
  \caption{Late-start / truncation behavior for $N = 2$, $\alpha = 2 / (L + m)$, and $\rho \approx \fpeval{round(\ratearray[1,2], 6)}$.}
  \label{fig:latestart-dim2-opt}
\end{figure}

\begin{figure}[t]
  \readdef{figures/dim=5/rates.txt}{\mydata}
  \readarray\mydata\ratearray[-,\ncols]
  \centering
  \begin{subfigure}[t]{0.49\textwidth}
    \centering
    \includegraphics[width=\linewidth]{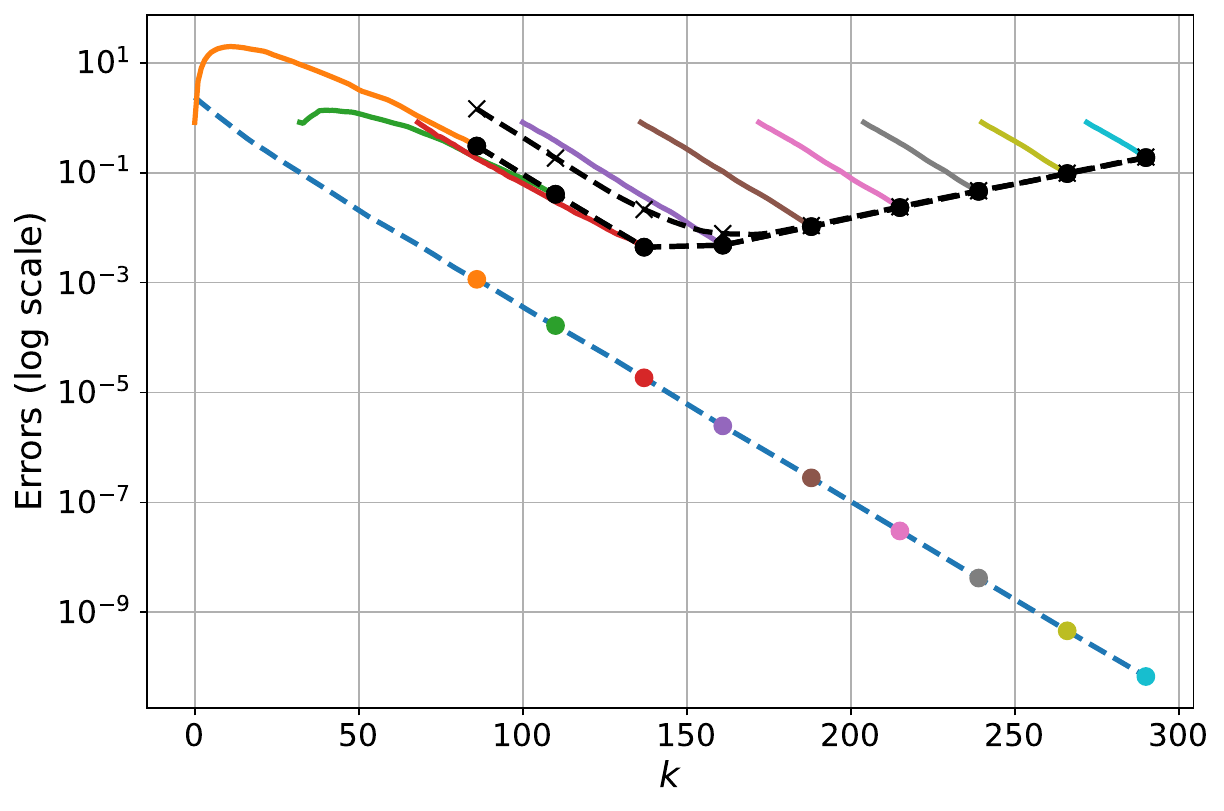}
    \caption{Forward mode.}
    \label{subfig:latestart-dim5-opt-fw}
  \end{subfigure}\hfill
  \begin{subfigure}[t]{0.49\textwidth}
    \centering
    \includegraphics[width=\linewidth]{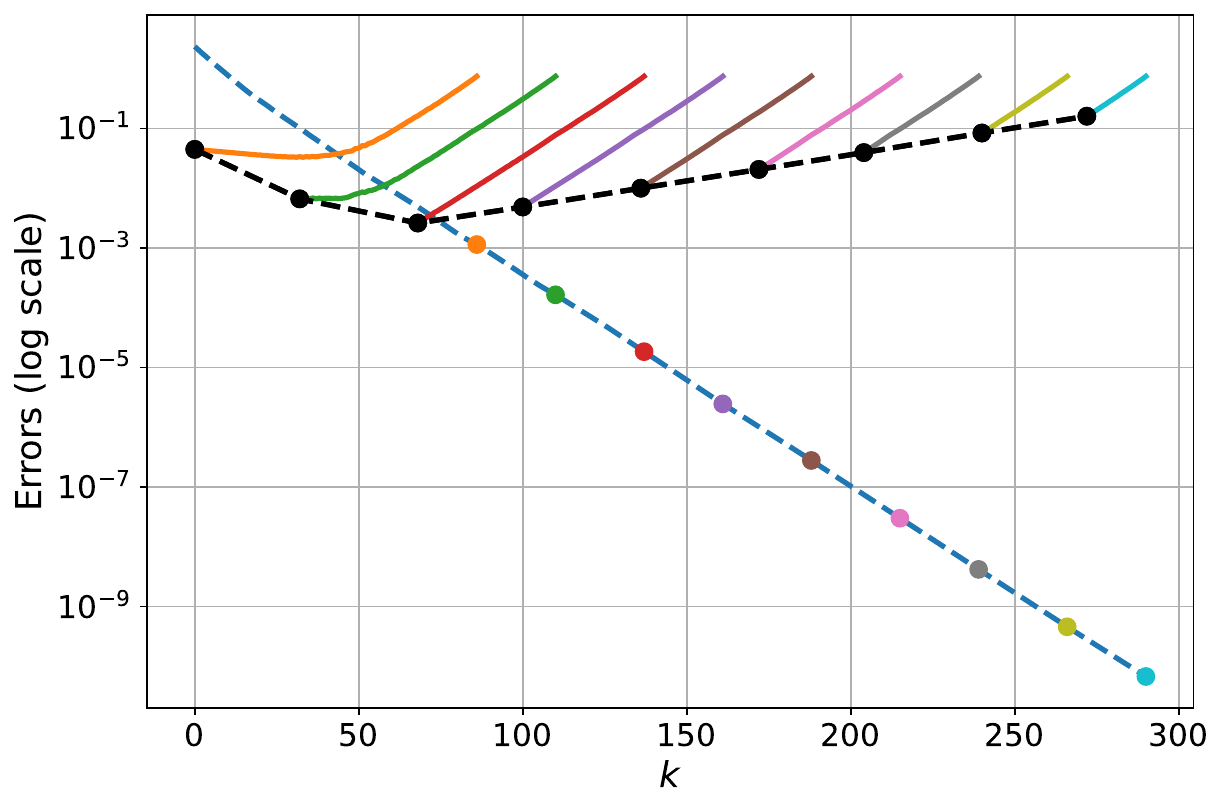}
    \caption{Reverse mode.}
    \label{subfig:latestart-dim5-opt-bw}
  \end{subfigure}
  \caption{Late-start / truncation behavior for $N = 5$, $\alpha = 2 / (L + m)$, and $\rho \approx \fpeval{round(\ratearray[1,2], 6)}$.}
  \label{fig:latestart-dim5-opt}
\end{figure}

\begin{figure}[t]
  \readdef{figures/dim=10/rates.txt}{\mydata}
  \readarray\mydata\ratearray[-,\ncols]
  \centering
  \begin{subfigure}[t]{0.49\textwidth}
    \centering
    \includegraphics[width=\linewidth]{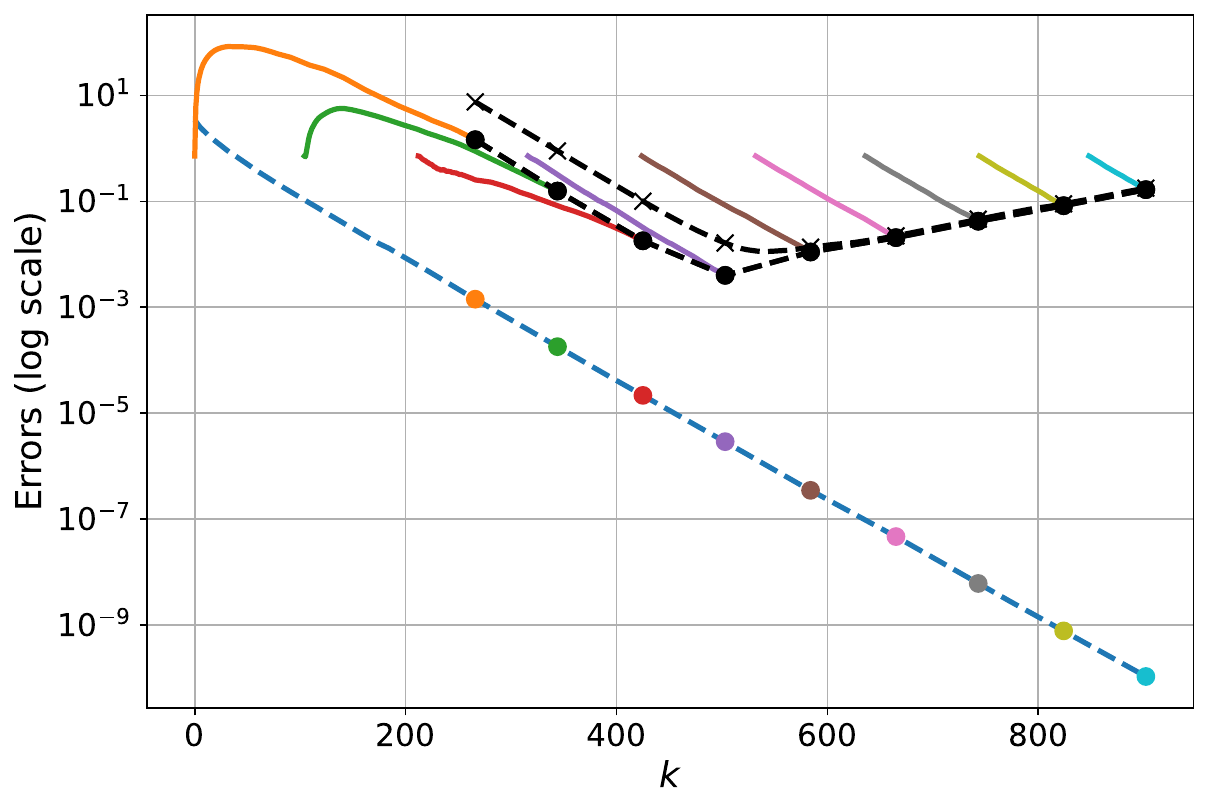}
    \caption{Forward mode.}
    \label{subfig:latestart-dim10-opt-fw}
  \end{subfigure}\hfill
  \begin{subfigure}[t]{0.49\textwidth}
    \centering
    \includegraphics[width=\linewidth]{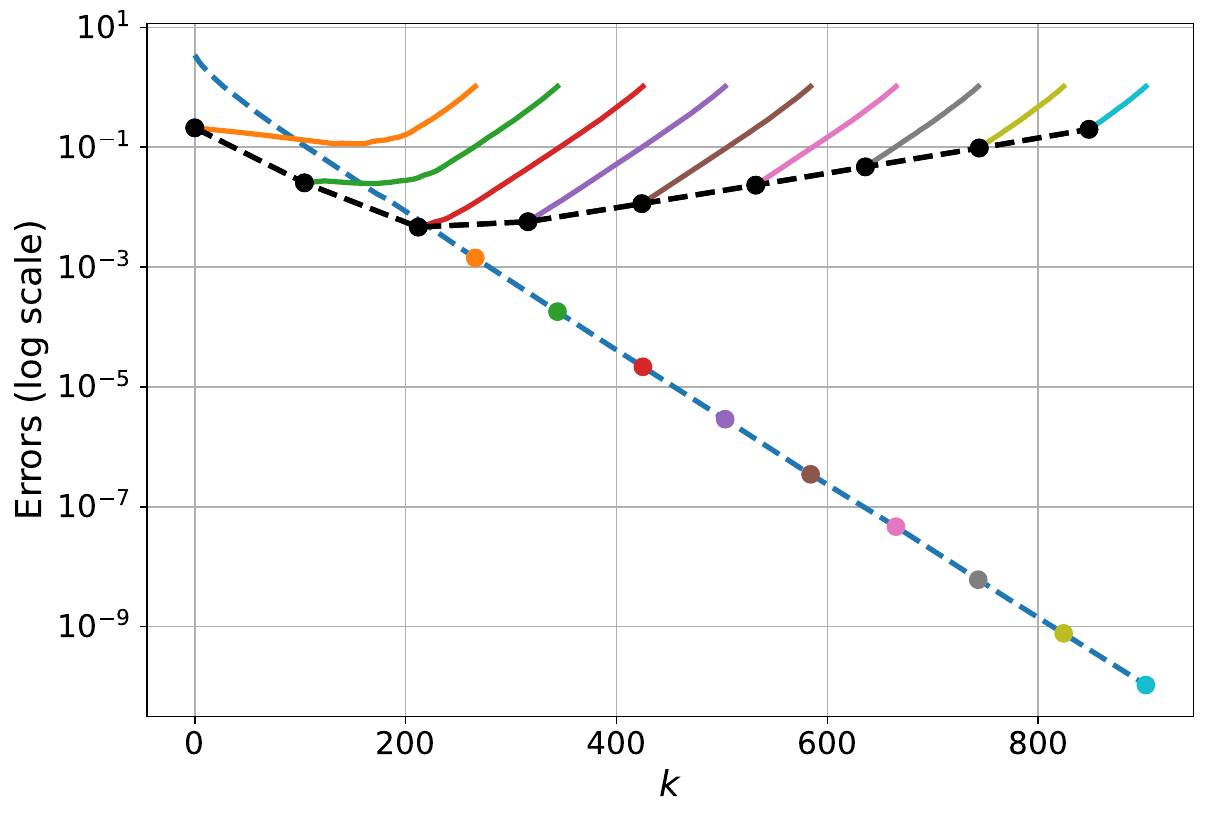}
    \caption{Reverse mode.}
    \label{subfig:latestart-dim10-opt-bw}
  \end{subfigure}
  \caption{Late-start / truncation behavior for $N = 10$, $\alpha = 2 / (L + m)$, and $\rho \approx \fpeval{round(\ratearray[1,2], 6)}$.}
  \label{fig:latestart-dim10-opt}
\end{figure}

\begin{figure}[t]
  \readdef{figures/dim=20/rates.txt}{\mydata}
  \readarray\mydata\ratearray[-,\ncols]
  \centering
  \begin{subfigure}[t]{0.49\textwidth}
    \centering
    \includegraphics[width=\linewidth]{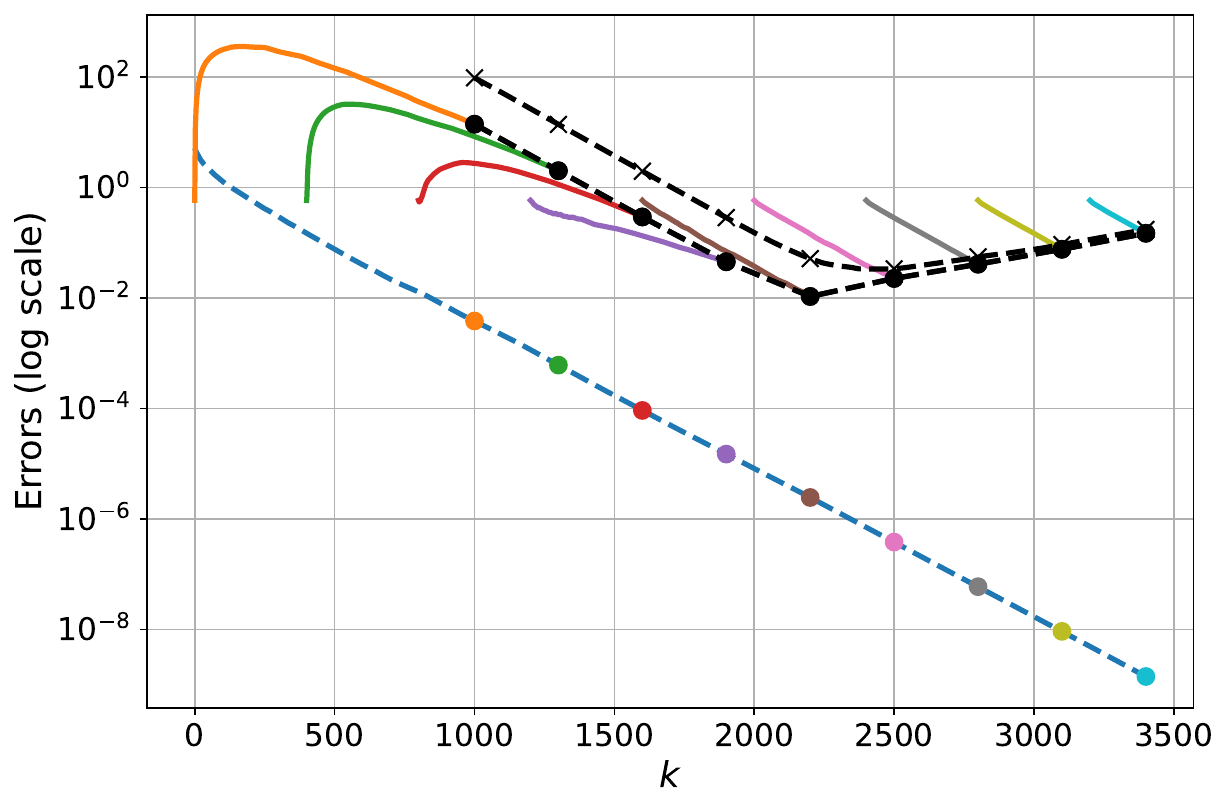}
    \caption{Forward mode.}
    \label{subfig:latestart-dim20-opt-fw}
  \end{subfigure}\hfill
  \begin{subfigure}[t]{0.49\textwidth}
    \centering
    \includegraphics[width=\linewidth]{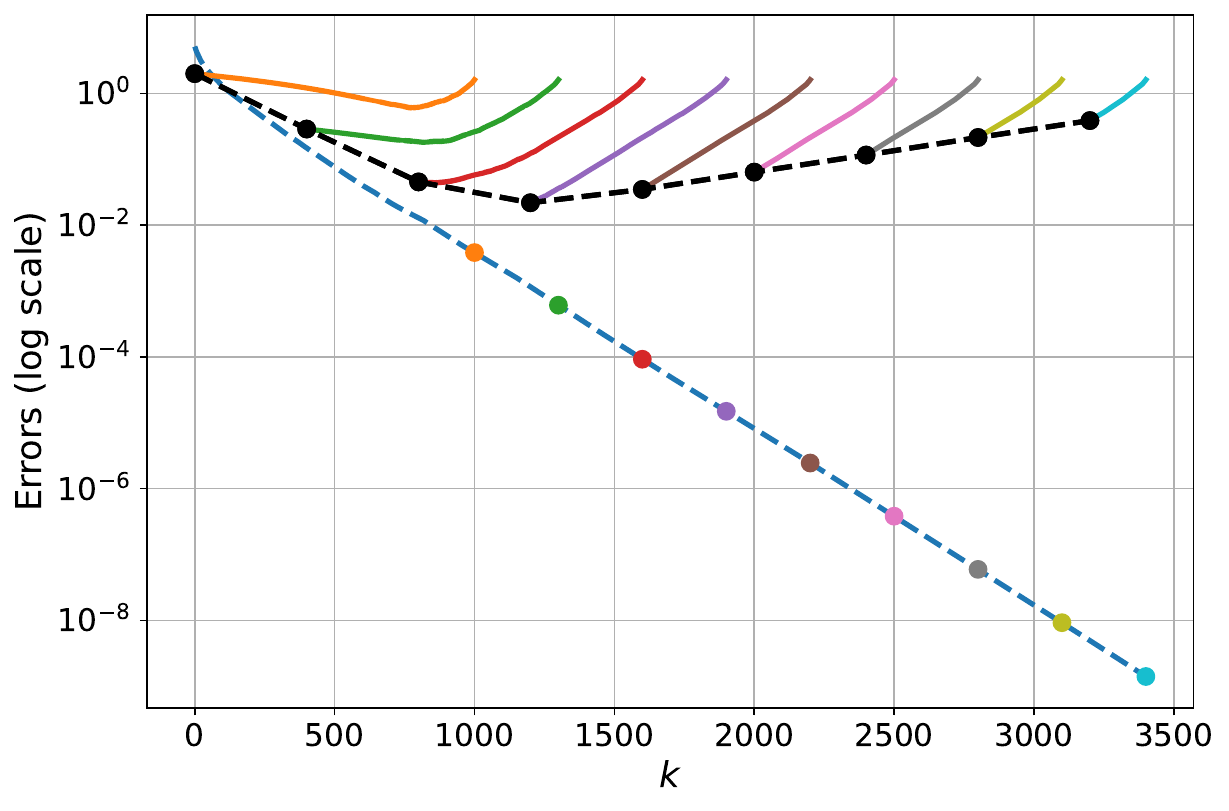}
    \caption{Reverse mode.}
    \label{subfig:latestart-dim20-opt-bw}
  \end{subfigure}
  \caption{Late-start / truncation behavior for $N = 20$, $\alpha = 2 / (L + m)$, and $\rho \approx \fpeval{round(\ratearray[1,2], 6)}$.}
  \label{fig:latestart-dim20-opt}
\end{figure}

\begin{figure}[t]
  \readdef{figures/dim=30/rates.txt}{\mydata}
  \readarray\mydata\ratearray[-,\ncols]
  \centering
  \begin{subfigure}[t]{0.49\textwidth}
    \centering
    \includegraphics[width=\linewidth]{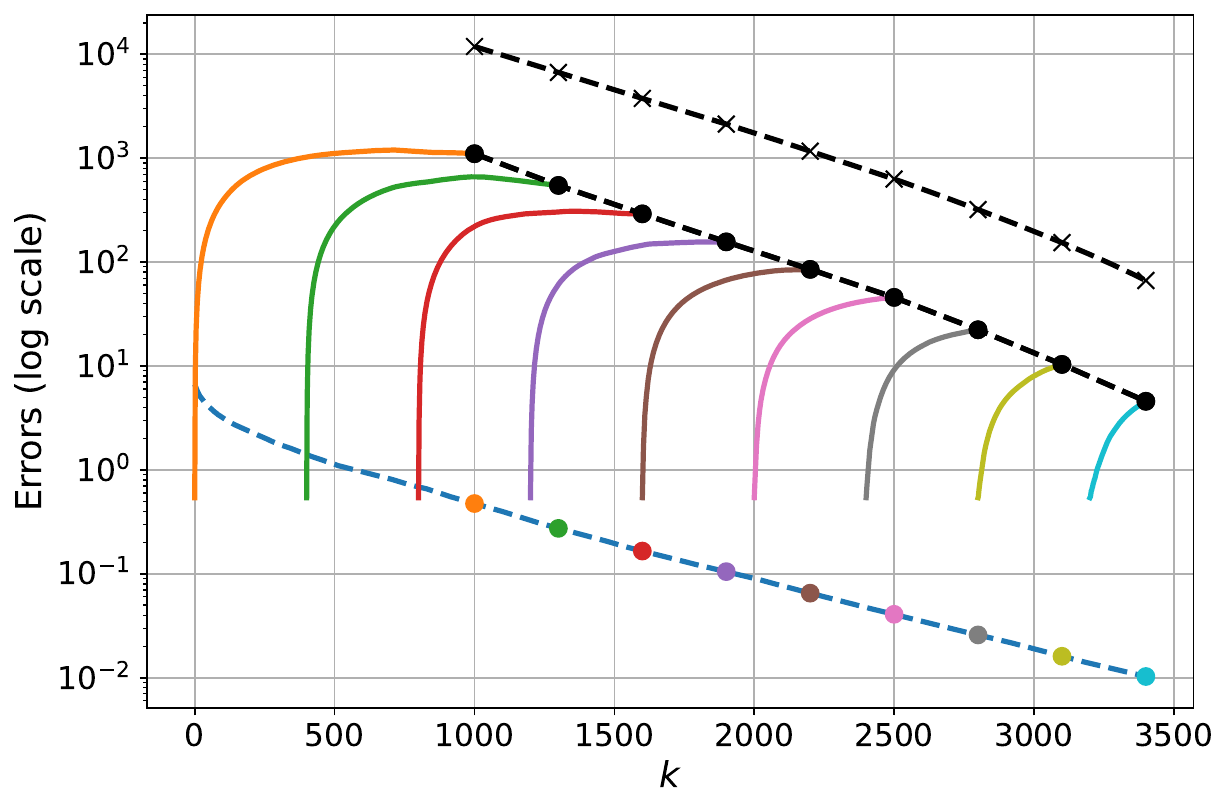}
    \caption{Forward mode.}
    \label{subfig:latestart-dim30-opt-fw}
  \end{subfigure}\hfill
  \begin{subfigure}[t]{0.49\textwidth}
    \centering
    \includegraphics[width=\linewidth]{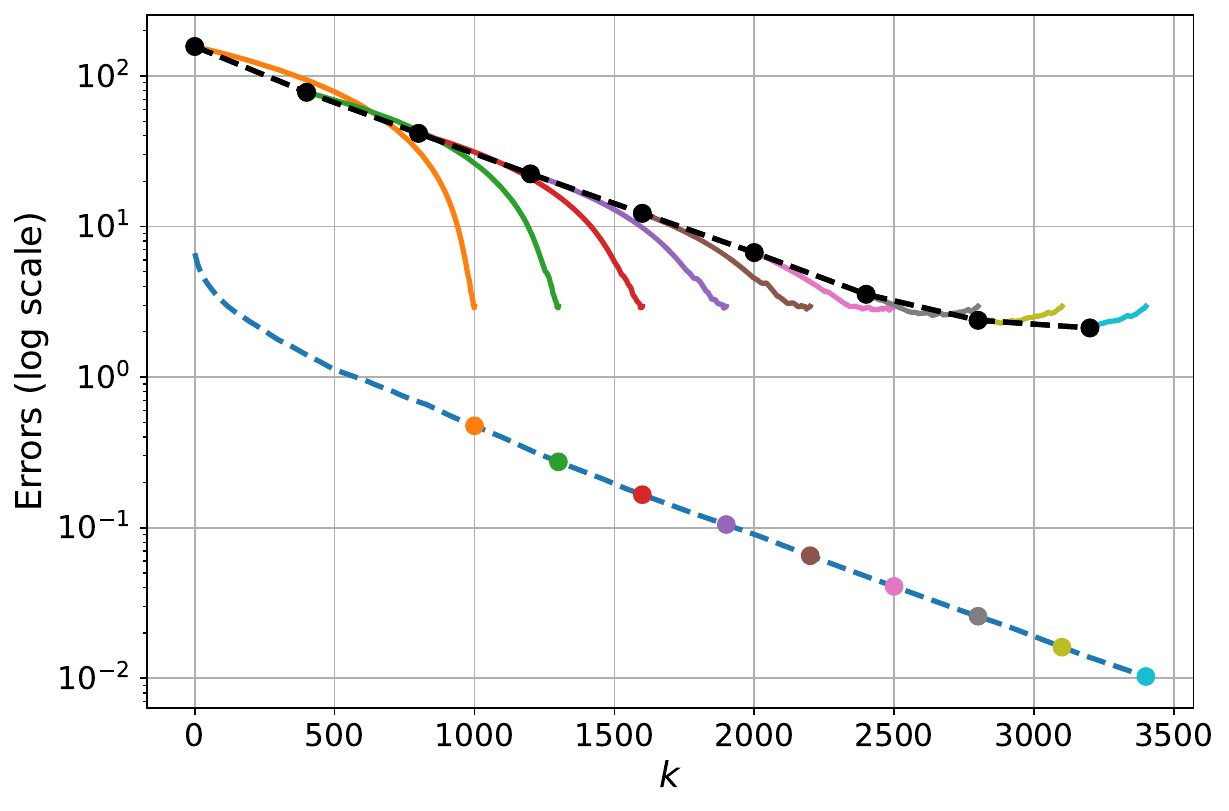}
    \caption{Reverse mode.}
    \label{subfig:latestart-dim30-opt-bw}
  \end{subfigure}
  \caption{Late-start / truncation behavior for $N = 30$, $\alpha = 2 / (L + m)$, and $\rho \approx \fpeval{round(\ratearray[1,2], 6)}$.}
  \label{fig:latestart-dim30-opt}
\end{figure}

\begin{figure}[t]
  \readdef{figures/dim=40/rates.txt}{\mydata}
  \readarray\mydata\ratearray[-,\ncols]
  \centering
  \begin{subfigure}[t]{0.49\textwidth}
    \centering
    \includegraphics[width=\linewidth]{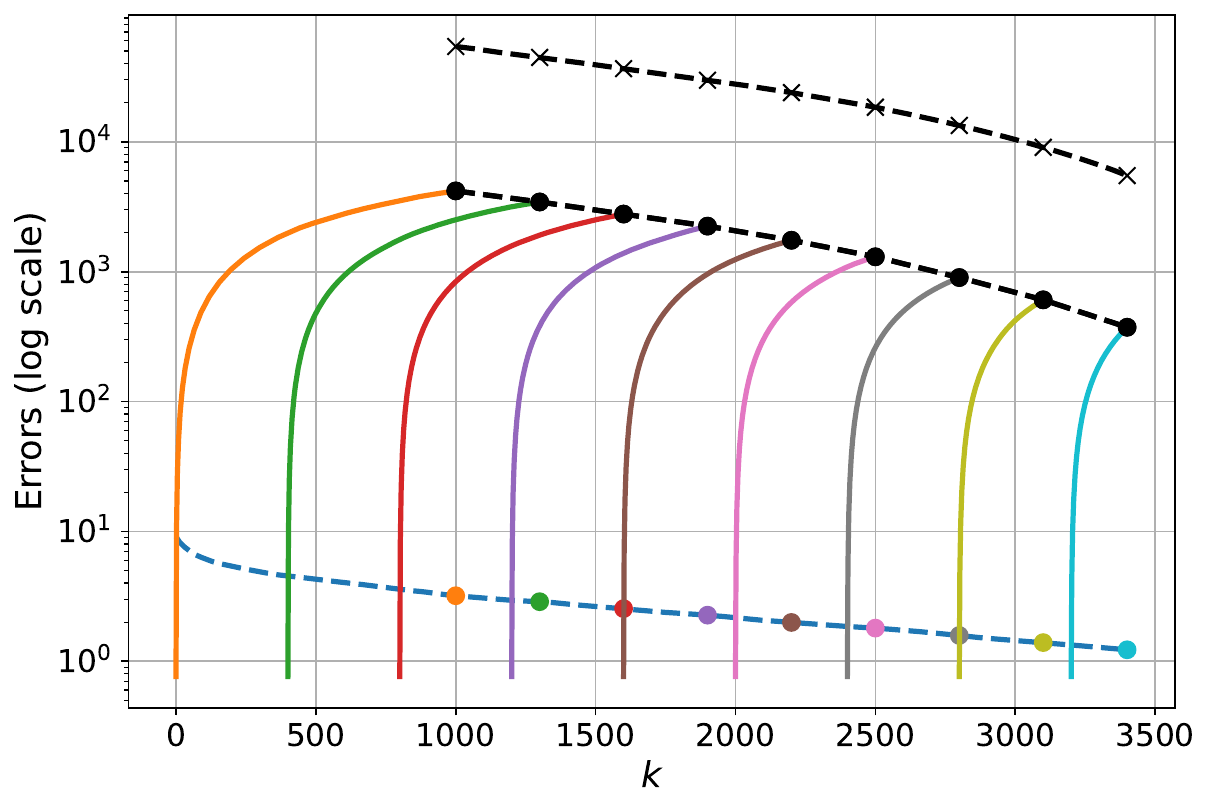}
    \caption{Forward mode.}
    \label{subfig:latestart-dim40-opt-fw}
  \end{subfigure}\hfill
  \begin{subfigure}[t]{0.49\textwidth}
    \centering
    \includegraphics[width=\linewidth]{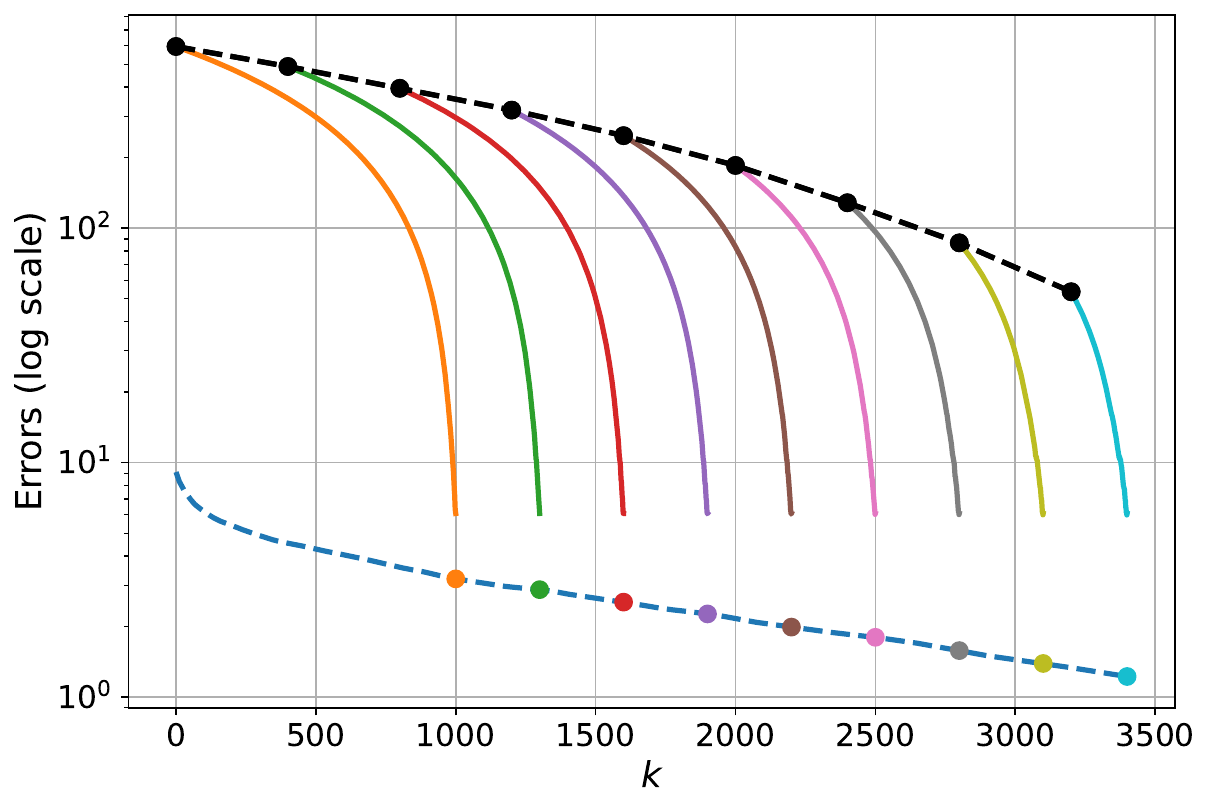}
    \caption{Reverse mode.}
    \label{subfig:latestart-dim40-opt-bw}
  \end{subfigure}
  \caption{Late-start / truncation behavior for $N = 40$, $\alpha = 2 / (L + m)$, and $\rho \approx \fpeval{round(\ratearray[1,2], 6)}$.}
  \label{fig:latestart-dim40-opt}
\end{figure}

\begin{figure}[t]
  \readdef{figures/dim=2/rates.txt}{\mydata}
  \readarray\mydata\ratearray[-,\ncols]
  \centering
  \begin{subfigure}[t]{0.49\textwidth}
    \centering
    \includegraphics[width=\linewidth]{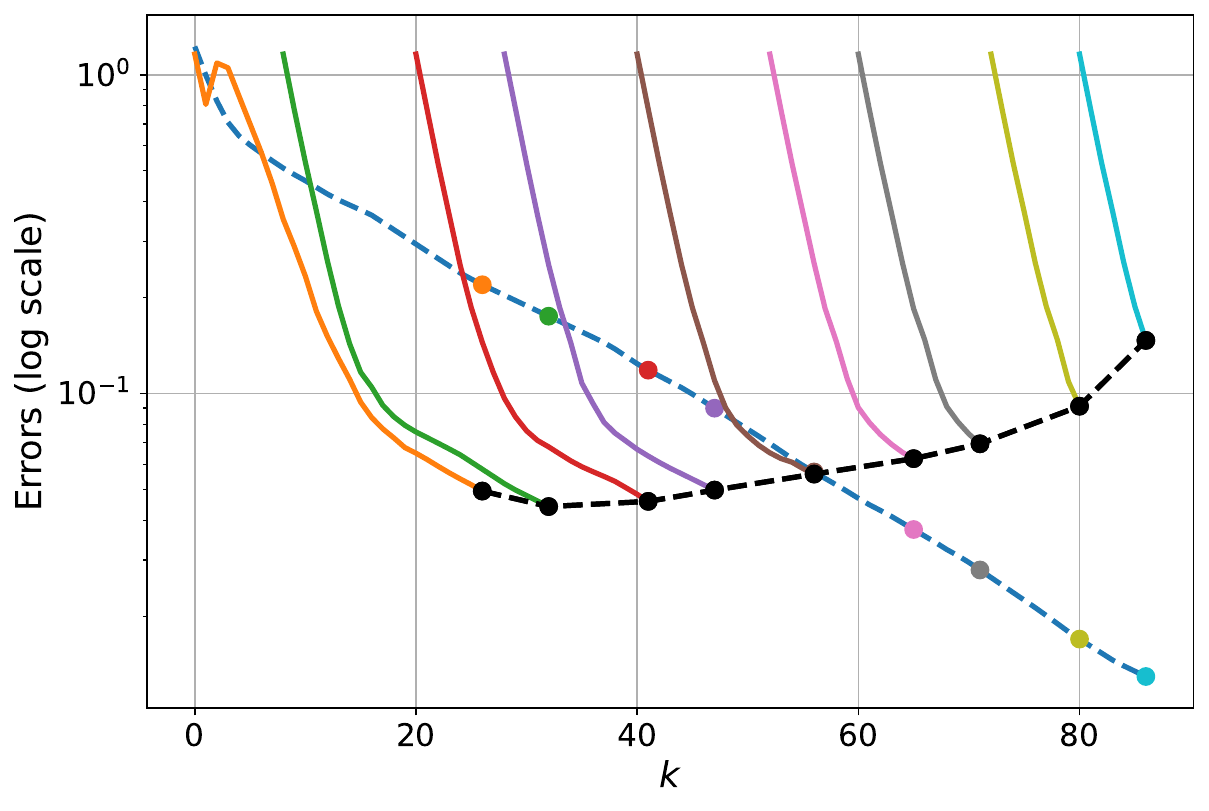}
    \caption{Forward mode.}
    \label{subfig:latestart-dim2-subopt-fw}
  \end{subfigure}\hfill
  \begin{subfigure}[t]{0.49\textwidth}
    \centering
    \includegraphics[width=\linewidth]{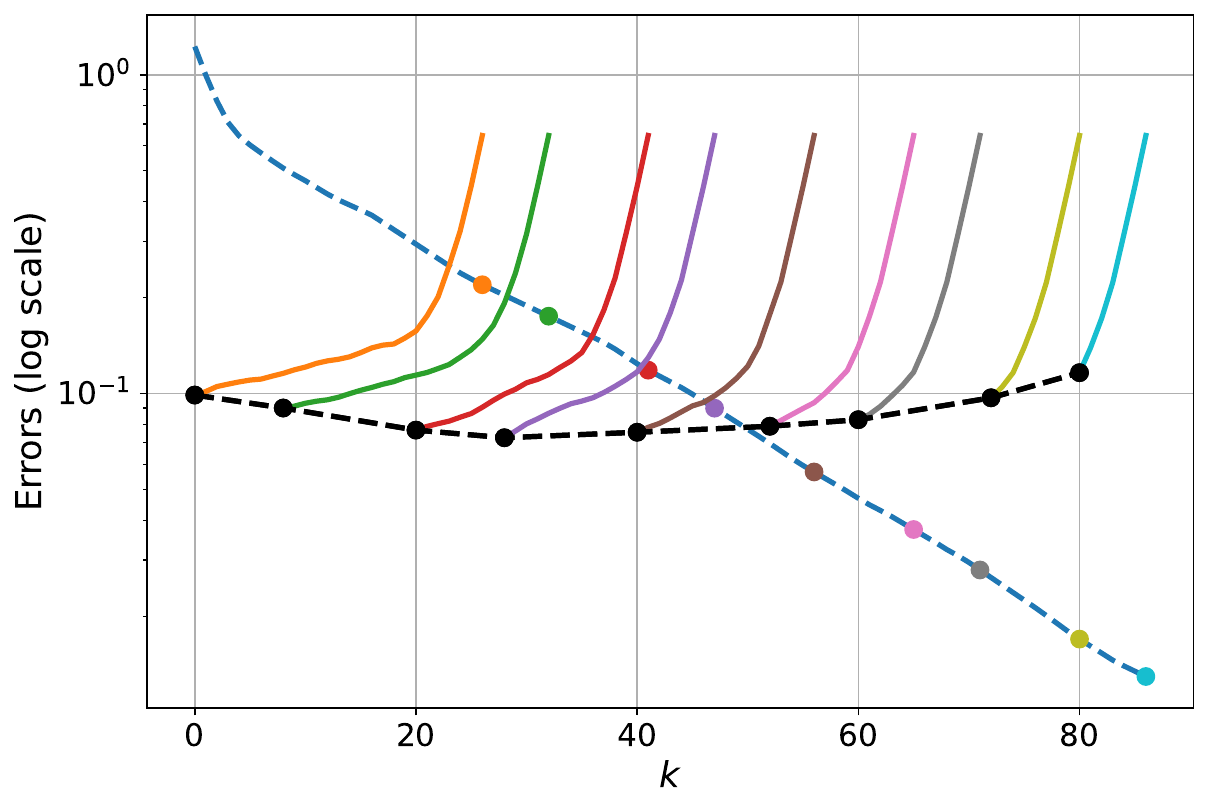}
    \caption{Reverse mode.}
    \label{subfig:latestart-dim2-subopt-bw}
  \end{subfigure}
  \caption{Late-start / truncation behavior for $N = 2$, $\alpha = 1 / (3L)$, and $\rho \approx \fpeval{round(\ratearray[1,2], 6)}$.}
  \label{fig:latestart-dim2-subopt}
\end{figure}

\begin{figure}[t]
  \readdef{figures/dim=5/rates.txt}{\mydata}
  \readarray\mydata\ratearray[-,\ncols]
  \centering
  \begin{subfigure}[t]{0.49\textwidth}
    \centering
    \includegraphics[width=\linewidth]{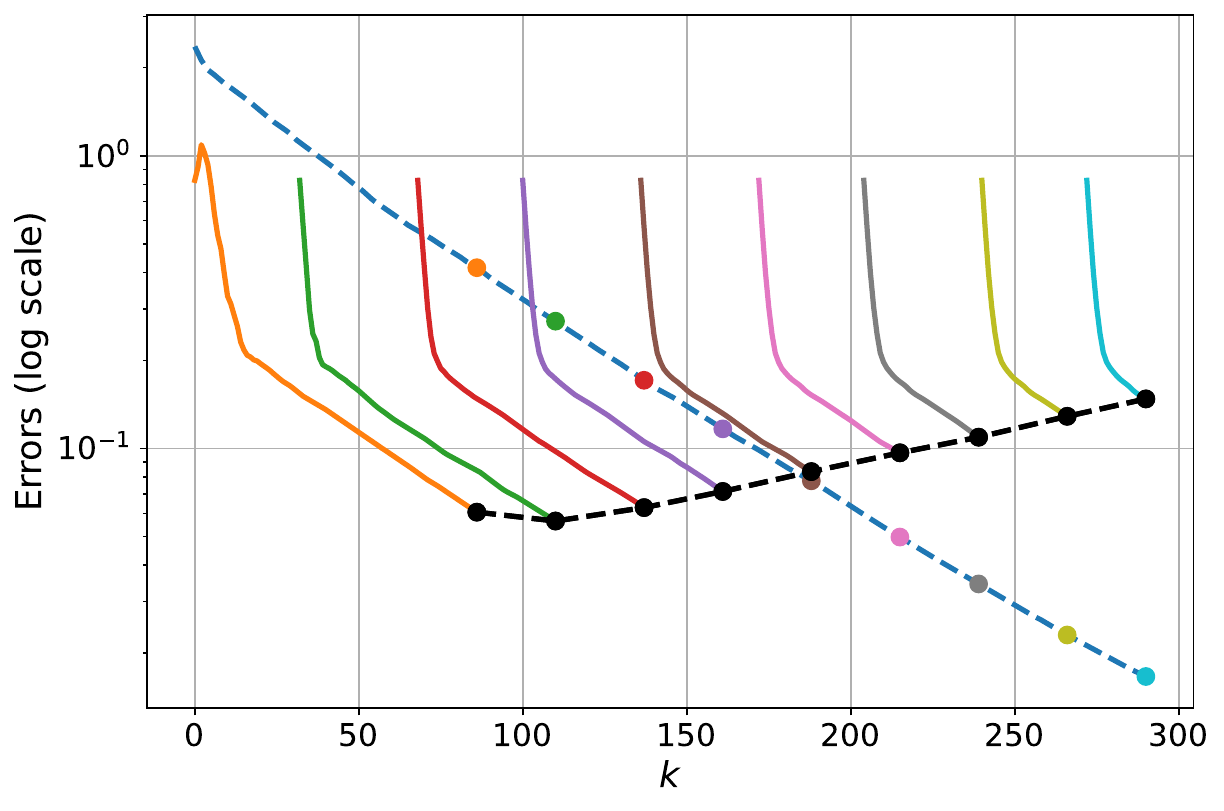}
    \caption{Forward mode.}
    \label{subfig:latestart-dim5-subopt-fw}
  \end{subfigure}\hfill
  \begin{subfigure}[t]{0.49\textwidth}
    \centering
    \includegraphics[width=\linewidth]{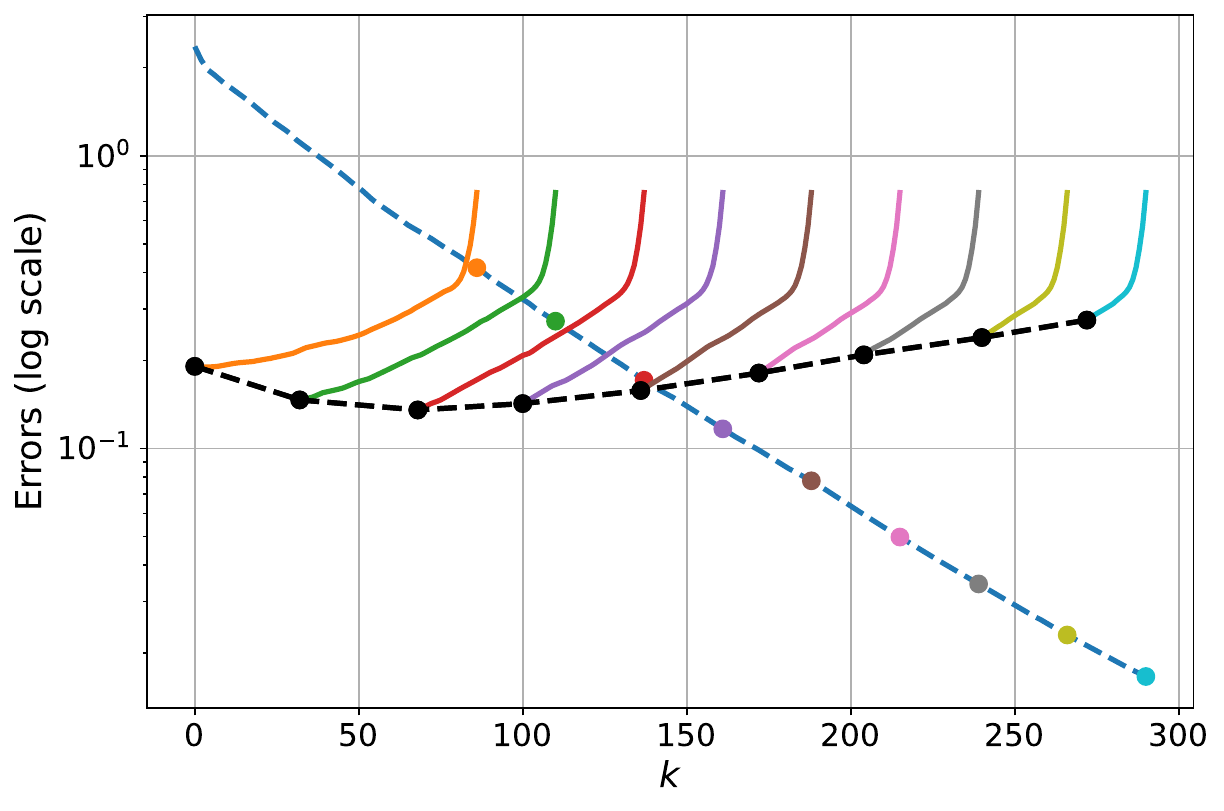}
    \caption{Reverse mode.}
    \label{subfig:latestart-dim5-subopt-bw}
  \end{subfigure}
  \caption{Late-start / truncation behavior for $N = 5$, $\alpha = 1 / (3L)$, and $\rho \approx \fpeval{round(\ratearray[1,2], 6)}$.}
  \label{fig:latestart-dim5-subopt}
\end{figure}

\begin{figure}[t]
  \readdef{figures/dim=10/rates.txt}{\mydata}
  \readarray\mydata\ratearray[-,\ncols]
  \centering
  \begin{subfigure}[t]{0.49\textwidth}
    \centering
    \includegraphics[width=\linewidth]{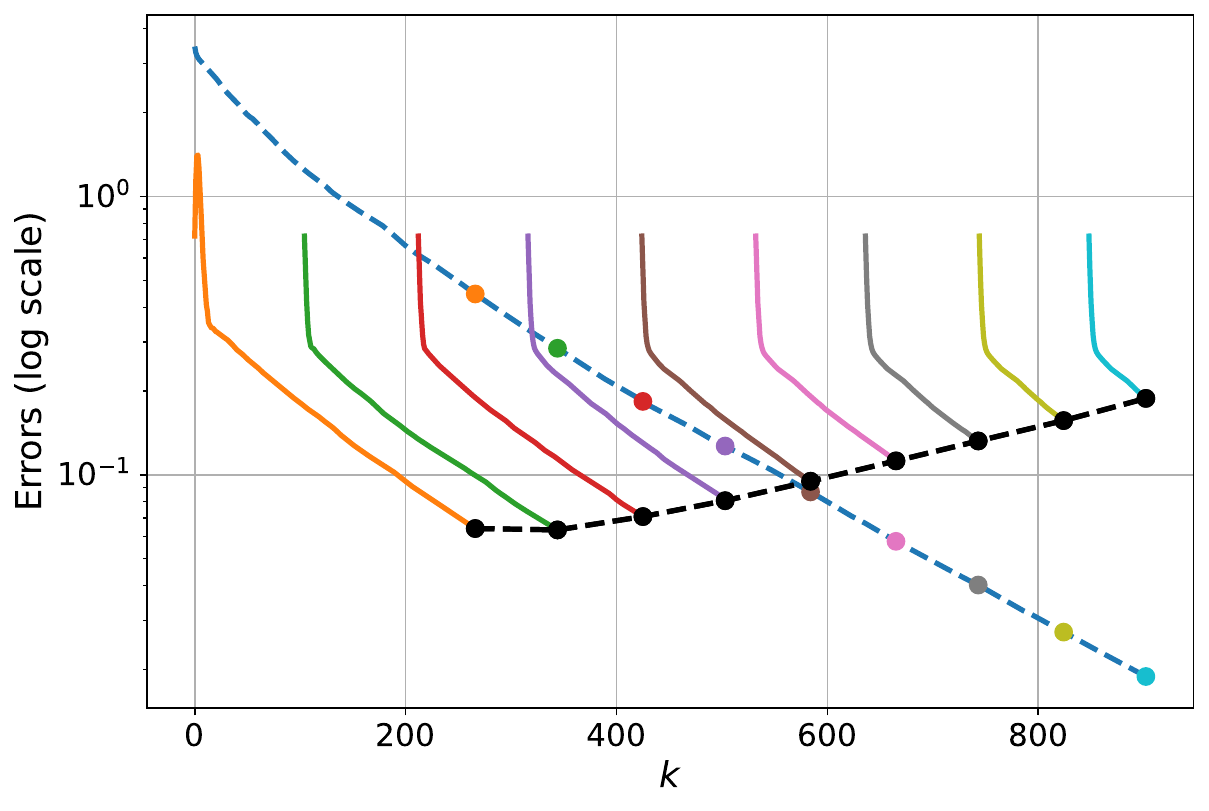}
    \caption{Forward mode.}
    \label{subfig:latestart-dim10-subopt-fw}
  \end{subfigure}\hfill
  \begin{subfigure}[t]{0.49\textwidth}
    \centering
    \includegraphics[width=\linewidth]{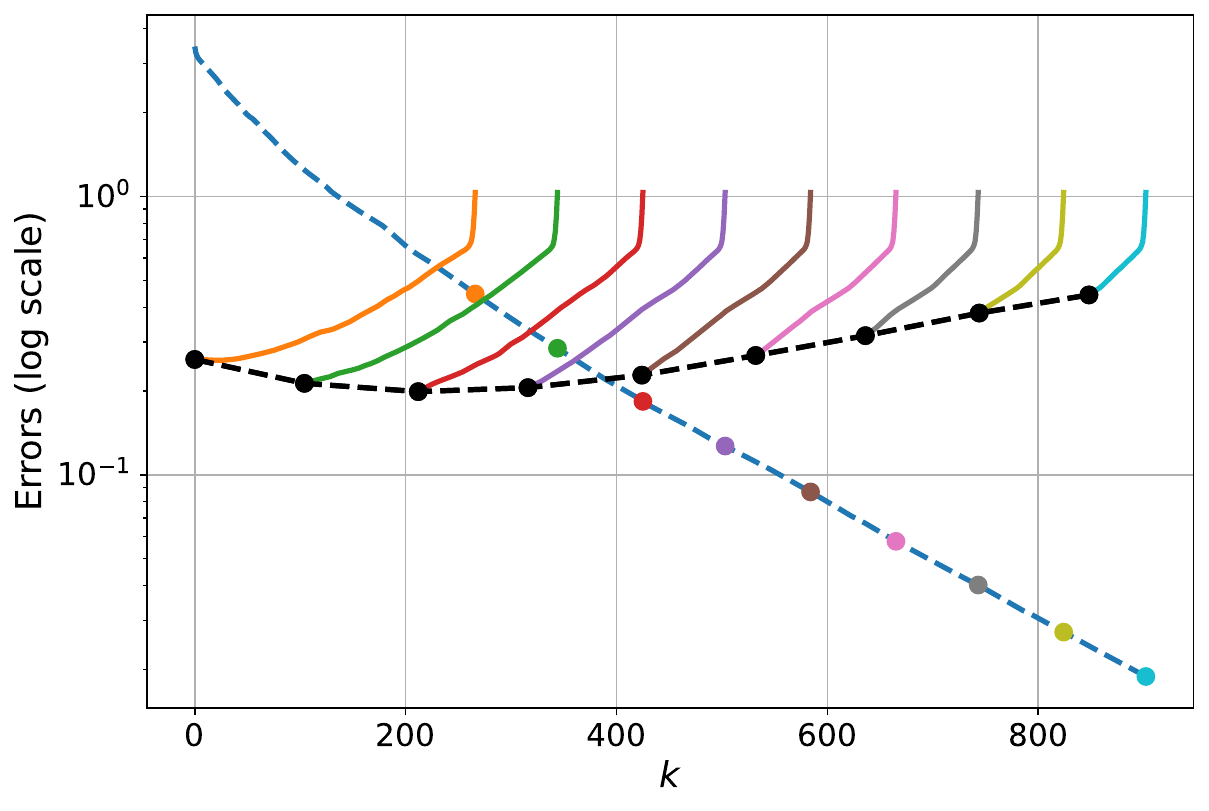}
    \caption{Reverse mode.}
    \label{subfig:latestart-dim10-subopt-bw}
  \end{subfigure}
  \caption{Late-start / truncation behavior for $N = 10$, $\alpha = 1 / (3L)$, and $\rho \approx \fpeval{round(\ratearray[1,2], 6)}$.}
  \label{fig:latestart-dim10-subopt}
\end{figure}

\begin{figure}[t]
  \readdef{figures/dim=20/rates.txt}{\mydata}
  \readarray\mydata\ratearray[-,\ncols]
  \centering
  \begin{subfigure}[t]{0.49\textwidth}
    \centering
    \includegraphics[width=\linewidth]{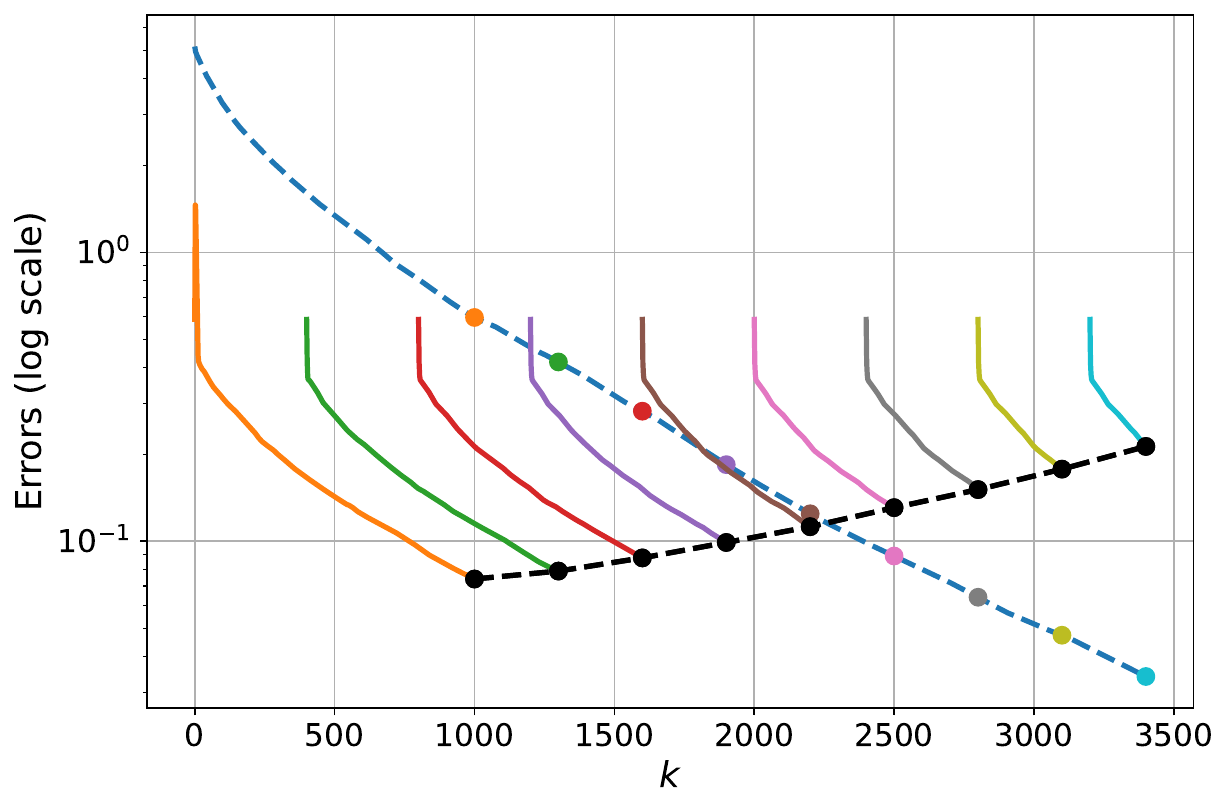}
    \caption{Forward mode.}
    \label{subfig:latestart-dim20-subopt-fw}
  \end{subfigure}\hfill
  \begin{subfigure}[t]{0.49\textwidth}
    \centering
    \includegraphics[width=\linewidth]{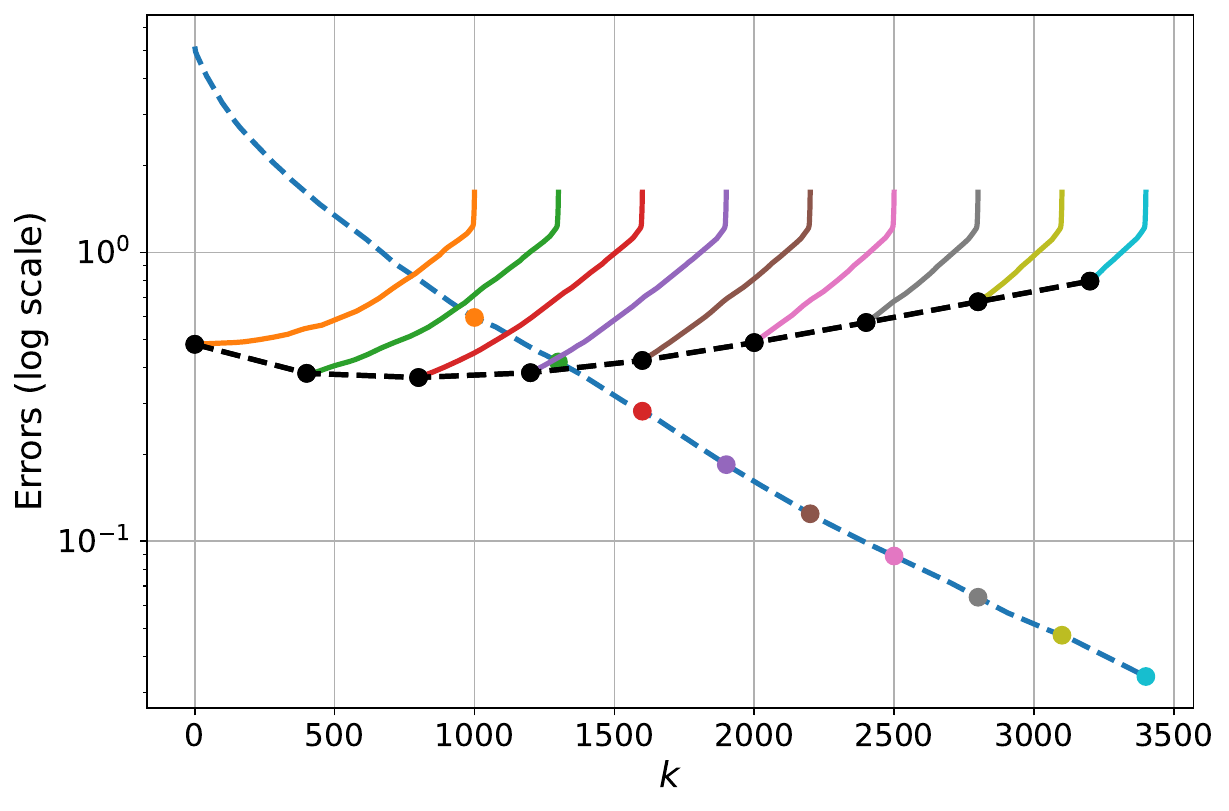}
    \caption{Reverse mode.}
    \label{subfig:latestart-dim20-subopt-bw}
  \end{subfigure}
  \caption{Late-start / truncation behavior for $N = 20$, $\alpha = 1 / (3L)$, and $\rho \approx \fpeval{round(\ratearray[1,2], 6)}$.}
  \label{fig:latestart-dim20-subopt}
\end{figure}

\begin{figure}[t]
  \readdef{figures/dim=30/rates.txt}{\mydata}
  \readarray\mydata\ratearray[-,\ncols]
  \centering
  \begin{subfigure}[t]{0.49\textwidth}
    \centering
    \includegraphics[width=\linewidth]{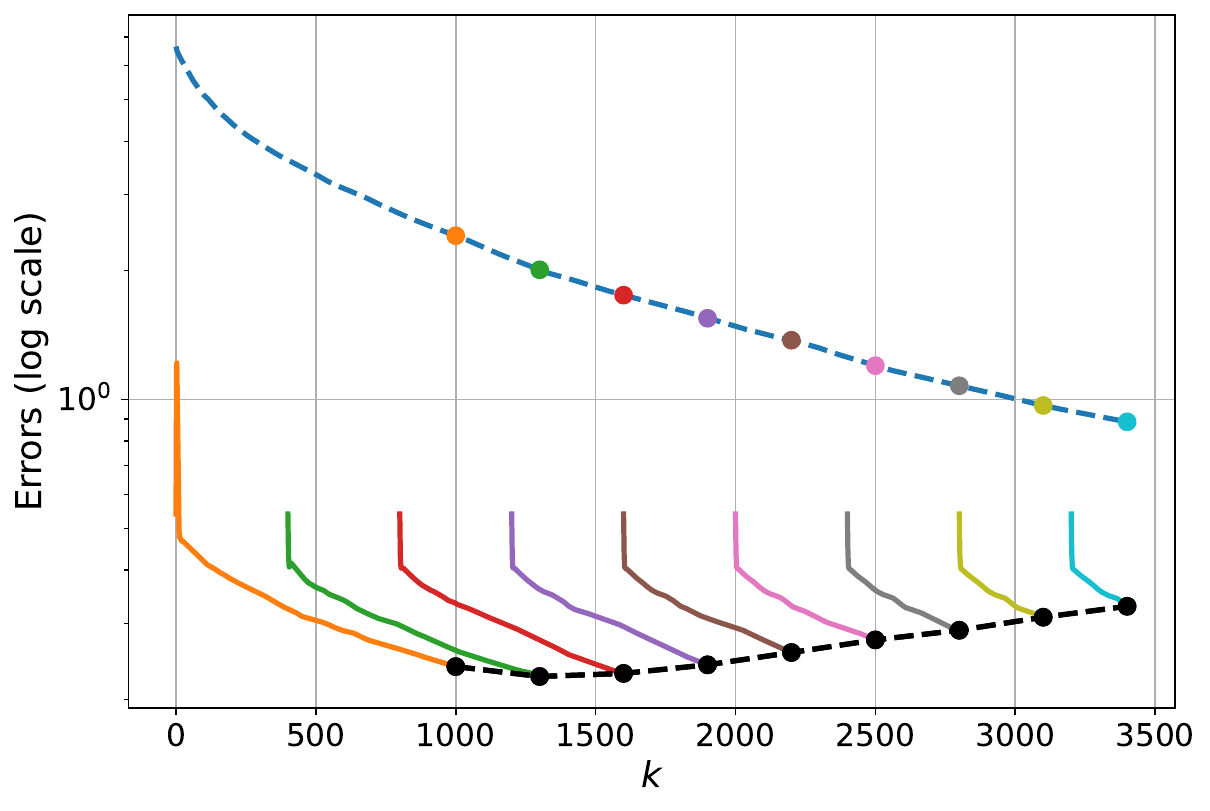}
    \caption{Forward mode.}
    \label{subfig:latestart-dim30-subopt-fw}
  \end{subfigure}\hfill
  \begin{subfigure}[t]{0.49\textwidth}
    \centering
    \includegraphics[width=\linewidth]{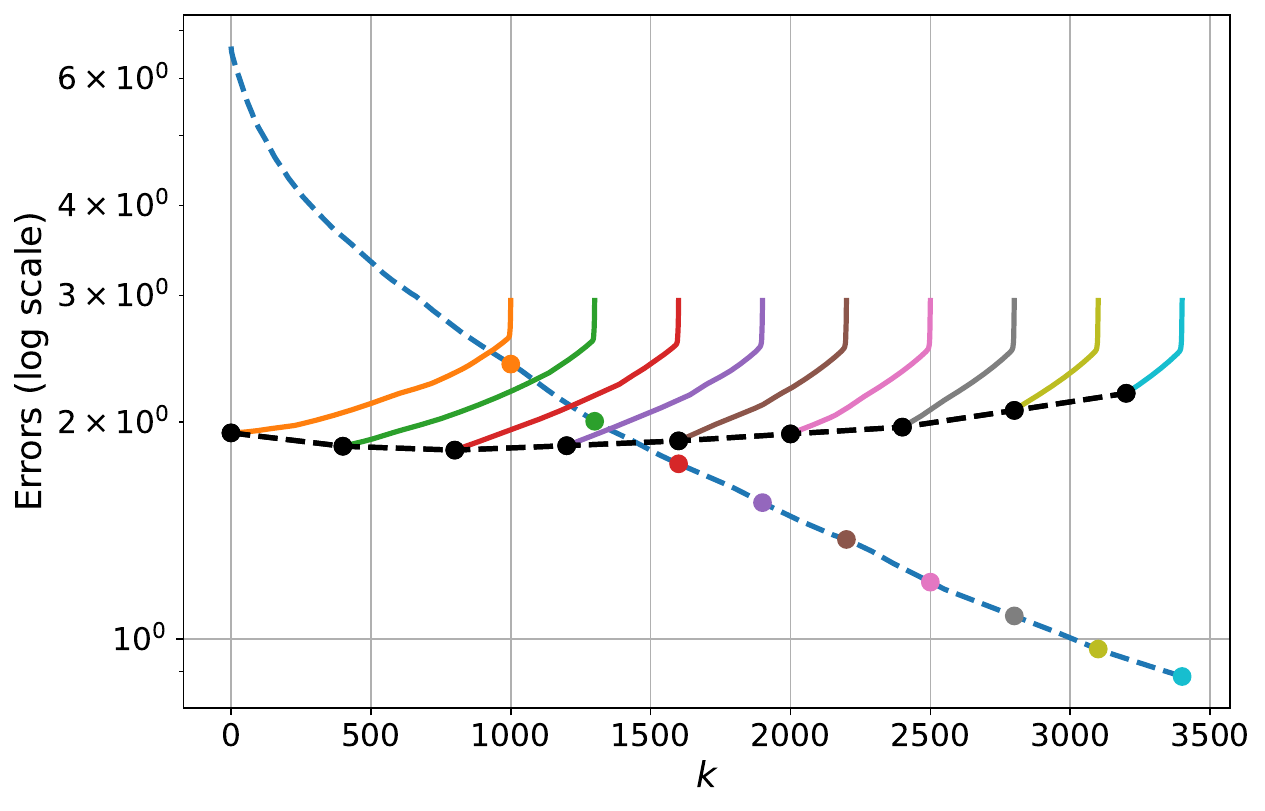}
    \caption{Reverse mode.}
    \label{subfig:latestart-dim30-subopt-bw}
  \end{subfigure}
  \caption{Late-start / truncation behavior for $N = 30$, $\alpha = 1 / (3L)$, and $\rho \approx \fpeval{round(\ratearray[1,2], 6)}$.}
  \label{fig:latestart-dim30-subopt}
\end{figure}

\begin{figure}[t]
  \readdef{figures/dim=40/rates.txt}{\mydata}
  \readarray\mydata\ratearray[-,\ncols]
  \centering
  \begin{subfigure}[t]{0.49\textwidth}
    \centering
    \includegraphics[width=\linewidth]{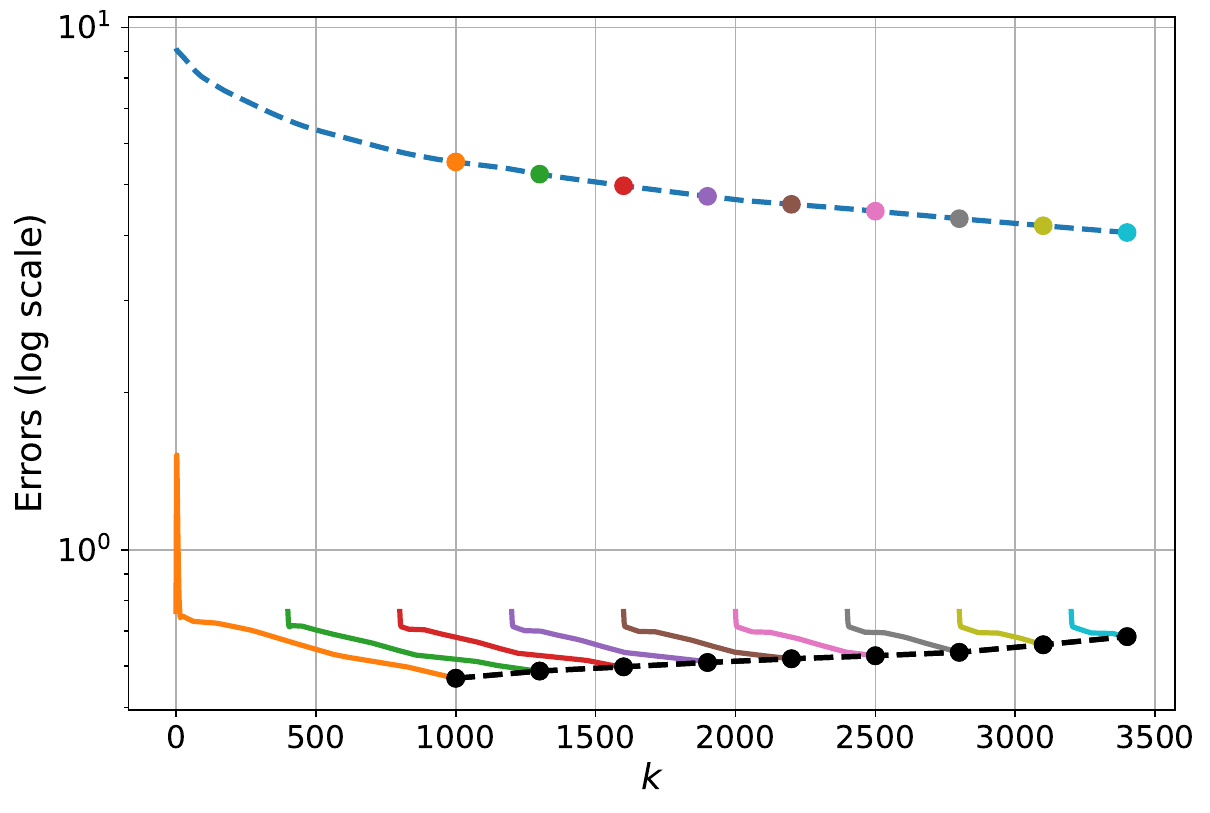}
    \caption{Forward mode.}
    \label{subfig:latestart-dim40-subopt-fw}
  \end{subfigure}\hfill
  \begin{subfigure}[t]{0.49\textwidth}
    \centering
    \includegraphics[width=\linewidth]{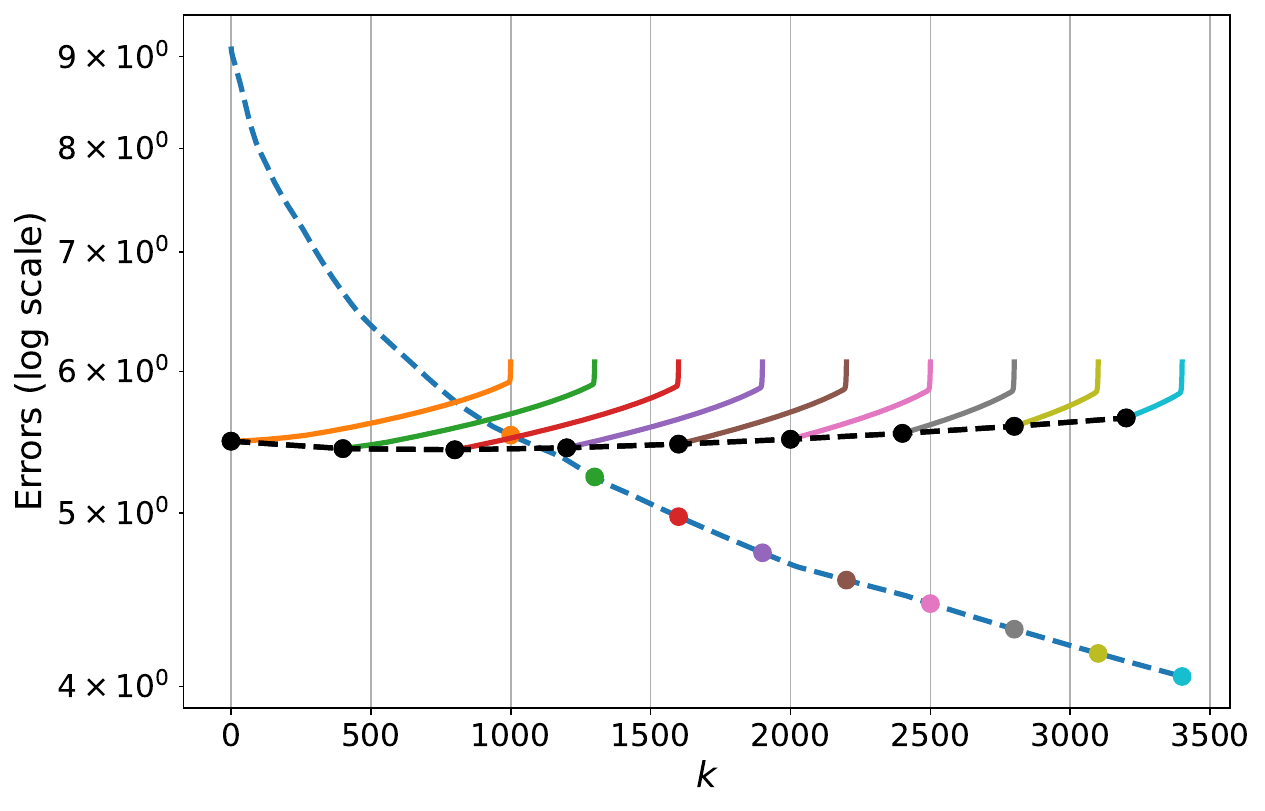}
    \caption{Reverse mode.}
    \label{subfig:latestart-dim40-subopt-bw}
  \end{subfigure}
  \caption{Late-start / truncation behavior for $N = 40$, $\alpha = 1 / (3L)$, and $\rho \approx \fpeval{round(\ratearray[1,2], 6)}$.}
  \label{fig:latestart-dim40-subopt}
\end{figure}
\clearpage

\section*{Acknowledgments}
Sheheryar Mehmood and Peter Ochs are supported by the German Research Foundation
(DFG Grant OC 150/4-1).


{\small
\bibliographystyle{ieee}
\bibliography{main}
}

\end{document}